\documentclass[accepted]{uai2026} % after acceptance, for a revised version

\usepackage[american]{babel}

\usepackage{natbib}
\usepackage{mathtools}
\usepackage{booktabs}
\usepackage{tikz}

\usepackage{amsfonts,amssymb,amsthm}
\usepackage{bbm}
\usepackage{multirow}
\usepackage{algorithm}
\usepackage{algorithmic}
\usepackage{paralist}
\setdefaultleftmargin{1em}{}{}{}{}{}
\usepackage{comment}

\newtheorem{theorem}{Theorem}
\newtheorem{definition}{Definition}

\newtheorem{assumption}{Assumption}

\newcommand{\R}{\mathbb{R}}
\newcommand{\E}{\mathbb{E}}

\newcommand{\Cov}{\mathrm{Cov}}

\newcommand{\bx}{\mathbf{x}}
\newcommand{\by}{\mathbf{y}}
\newcommand{\bz}{\mathbf{z}}

\newcommand{\bK}{\mathbf{K}}
\newcommand{\bP}{\mathbf{P}}
\newcommand{\bW}{\mathbf{W}}

\newcommand{\bm}{\mathbf{m}}

\newcommand{\bphi}{\boldsymbol{\phi}}

\newcommand{\bPhi}{\boldsymbol{\Phi}}
\newcommand{\beps}{\boldsymbol{\epsilon}}

\newcommand{\bmu}{\boldsymbol{\mu}}
\newcommand{\bSigma}{\boldsymbol{\Sigma}}

\newcommand{\bdelta}{\boldsymbol{\delta}}

\newcommand{\cL}{\mathcal{L}}
\newcommand{\cD}{\mathcal{D}}
\newcommand{\cH}{\mathcal{H}}
\newcommand{\cN}{\mathcal{N}}
\newcommand{\cO}{\mathcal{O}}
\newcommand{\cF}{\mathcal{F}}
\newcommand{\cG}{\mathcal{G}}
\newcommand{\cS}{\mathcal{S}}

\title{Nonlocal Bayesian Modeling of Continuous Spatio-Temporal Dynamics}

\author[1]{\href{mailto:dlwodud116@kaist.ac.kr}{Jaeyeong~Lee}{}}
\author[1]{Heeyoung~Kim\thanks{Corresponding author: \texttt{heeyoungkim@kaist.ac.kr}}}
\affil[1]{%
    Department of Industrial and Systems Engineering\\
    Korea Advanced Institute of Science and Technology (KAIST)\\
    Daejeon, Republic of Korea
}

\begin{document}
\maketitle

\begin{abstract}
Real-world spatio-temporal forecasting must handle irregular time points, spatially sparse observations, and the need for uncertainty quantification. This setting is often further compounded by \emph{nonlocal} interactions (long-range spatial coupling). Modeling continuous-space, continuous-time nonlocal dynamics naturally leads to infinite-dimensional integro-differential equations (IDEs), making principled Bayesian inference intractable. We propose the NonLocal Bayesian Spatio-Temporal model (\textsc{NLBST}), a hierarchical Bayesian framework for continuous spatio-temporal fields that learns explicit nonlocal coupling while retaining tractable inference. \textsc{NLBST} represents the latent field via a coordinate-based spatial basis expansion and models the coefficient process with a continuous-time ODE whose learnable linear operator corresponds to a Galerkin reduction of a nonlocal IDE; a Neural ODE residual captures additional nonlinear dynamics. 
A linear-Gaussian observation model enables Kalman-style sequential updates under missing and irregular observations, while the spatial basis representation enables inductive prediction at unmeasured locations without retraining. Global parameters are learned via variational inference, and uncertainty is handled through a Bayesian hierarchy. Experiments on synthetic and real-world datasets demonstrate strong forecasting and spatial generalization with well-calibrated uncertainty, yielding substantial gains over baselines in strongly nonlocal and partially observed regimes.

\end{abstract}

\section{Introduction}
\label{sec:intro}

Spatio-temporal forecasting is central to applications ranging from climate science to public health \citep{faghmous2014spatio,kim2019spatiotemporal,kumar2024spatio}. However, real-world sensing imposes practical constraints: observations arrive at irregular time points, are spatially sparse and scattered, and require calibrated uncertainty for downstream decision-making \citep{rubanova2019latent, jiang2018survey, pawar2024handling}. Addressing these three challenges in a single continuous space-time model with principled uncertainty quantification remains difficult. 

Existing approaches address these challenges only partially. 
Classical statistical models \citep{cressie2015statistics, wikle2019spatio, zammit2022deep}, such as kriging and Gaussian processes, naturally accommodate irregularly located observations and provide principled uncertainty; however, they often rely on separable or weakly structured spatio-temporal covariances, limiting their ability to represent complex, dynamically evolving dependencies.
Dynamic spatio-temporal models (DSTMs) \citep{stroud2001dynamic, wikle2010general, koo2024deep} explicitly model temporal evolution through state-transition equations, but their discrete-time Markov structure limits applicability under irregular temporal sampling. 
Deep learning methods \citep{yu2017spatio, che2018recurrent, jin2023spatio, yalavarthi2024grafiti, zhang2024irregular, liu2026apn} achieve strong forecasting accuracy at fixed sensor locations, but are often spatially transductive and rely on heuristic uncertainty estimates. 
Continuous-time neural models \citep{rubanova2019latent, kidger2020neural, choi2022graph, wu2024continuously}, typically based on Neural ODEs \citep{chen2018neural}, handle irregular sampling yet lack explicit spatial structure or principled Bayesian inference.

\begin{figure}[t]
    \centering
    \includegraphics[width=0.85\columnwidth]{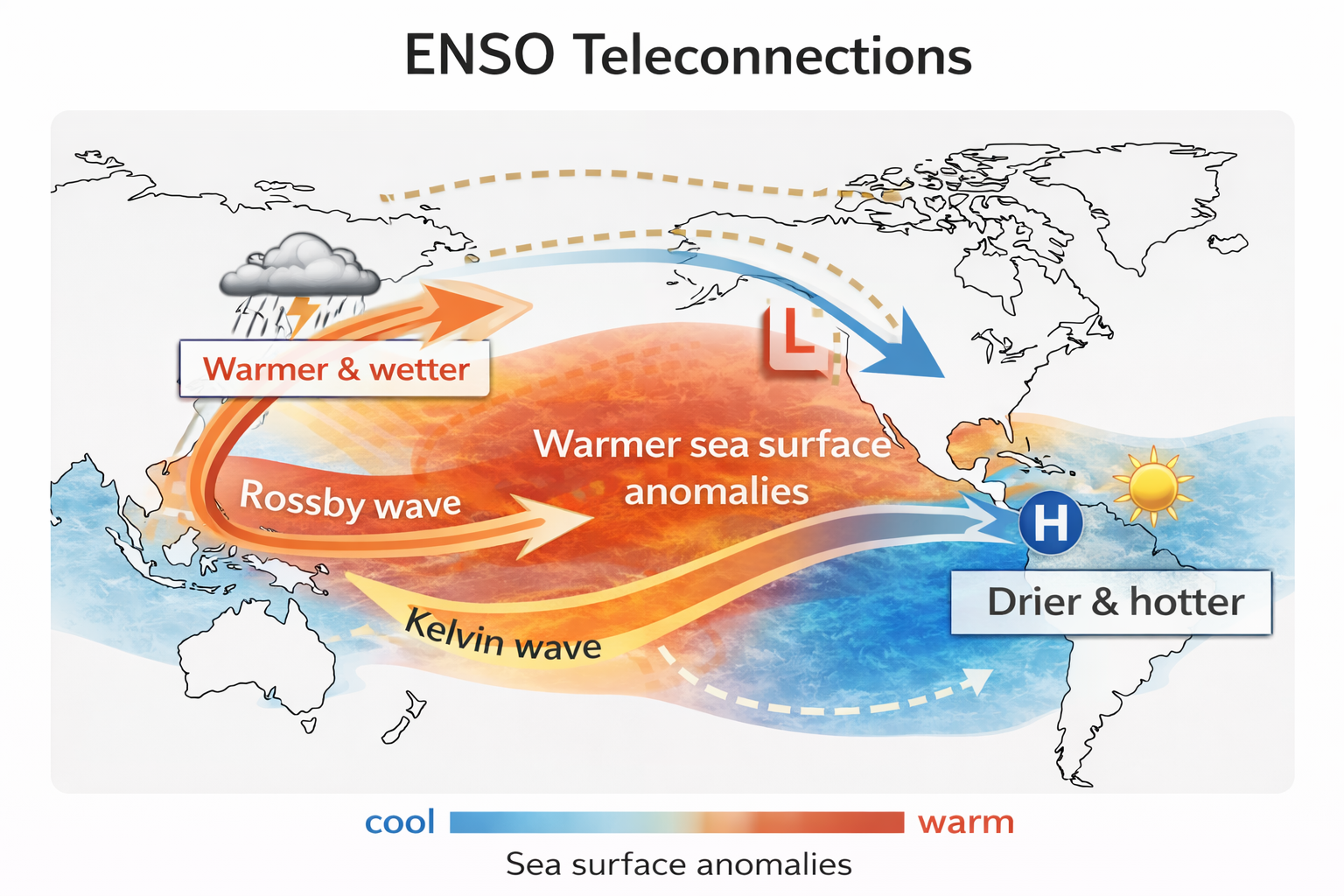}
    \caption{Schematic of ENSO teleconnections illustrating nonlocal spatio-temporal interactions. Tropical Pacific SST anomalies propagate through large-scale wave adjustments (Rossby and Kelvin waves), inducing coherent responses in distant regions and exemplifying long-range dependence beyond local interactions.}
    \label{fig:teleconnection}
\end{figure}

Furthermore, many spatio-temporal systems exhibit \textit{nonlocal} interactions, in which the temporal evolution at one location depends not only on its immediate neighborhood but also on distant regions \citep{wallace1981teleconnections, lee2020bayesian, tibau2022spatiotemporal,jung2024spatially}. A canonical example is climate teleconnection \citep{alizadeh2024review}: sea surface temperature (SST) anomalies in the tropical Pacific associated with the El Ni\~no--Southern Oscillation (ENSO) can reorganize atmospheric circulation and induce coherent responses in remote regions \citep{yang2018nino}, including precipitation anomalies over East Africa, storm-track shifts over the North Pacific, and hydroclimate variability over western North America \citep{yeh2018enso} (Figure~\ref{fig:teleconnection}). 

A principled formulation of such nonlocal dynamics in continuous space-time is given by the integro-differential equation (IDE) \citep{laing2003pde}:
\begin{equation}
\label{eq:ide_intro}
\partial_t \mu(x,t) = \int_D G(x,x') \mu(x',t) \, dx' + \mathcal{F}[\mu](x,t),
\end{equation}
where the kernel $G(x,x')$ encodes how location $x'$ influences the evolution at $x$, and $\mathcal{F}$ represents additional local dynamics.
However, posterior inference is typically intractable: in continuous space, $\mu(\cdot,t)$ is an infinite-dimensional function, and the nonlocal integral term is generally not available in closed form, requiring fine discretization of the domain and expensive numerical evaluation \citep{stuart2010inverse, dashti2015bayesian}.

To address these challenges, we propose a NonLocal Bayesian Spatio-Temporal model (NLBST),
a hierarchical Bayesian framework for spatio-temporal fields over continuous space and time that learns nonlocal interactions through a coupled spatio-temporal structure. 
The proposed model decomposes the latent field into coordinate-based spatial basis functions and a time-varying coefficient vector whose dynamics are governed by an ODE in continuous time. Irregular temporal sampling is handled by propagating the coefficient state across arbitrary time gaps, while spatial prediction at unmeasured locations reduces to evaluating the basis functions at new coordinates. 

The coefficient dynamics include an explicit linear coupling term that encodes nonlocal interactions. Conceptually, this can be viewed as a Galerkin reduction of the IDE in Eq.\eqref{eq:ide_intro}, providing a tractable finite-rank representation of long-range dependence, with all inference performed in coefficient space without discretizing the field. A Neural ODE residual \citep{chen2018neural} augments this linear term to capture additional nonlinear dynamics. For uncertainty quantification, a linear-Gaussian observation model admits closed-form Kalman-style updates for sequential Bayesian filtering, while global parameters are learned via variational inference over the Bayesian hierarchy \citep{wikle1998hierarchical}.

Our contributions are as follows:
\begin{compactitem}
    \item \textbf{Continuous-time Bayesian filtering for irregular and incomplete observations.}
    We propagate the latent coefficient state via an ODE across arbitrary time gaps. Combined with a linear-Gaussian measurement model, this yields closed-form Kalman-style measurement updates that naturally handle missing entries without the need for imputation.

    \item \textbf{Inductive spatial prediction via spatial basis representation.}
    Since the learned coefficient dynamics are shared across space, prediction at unmeasured locations requires only evaluating the basis functions at new coordinates, without retraining.

    \item \textbf{Principled uncertainty via Bayesian hierarchy.}
    Uncertainty is captured through a Bayesian hierarchical model: a linear-Gaussian observation model allows closed-form Kalman-style updates for sequential filtering, while global parameters are learned via variational inference.

    \item \textbf{Learnable nonlocal interactions via a Galerkin reduction.}
    The linear coupling term in the coefficient ODE induces a finite-rank nonlocal kernel in field space---a Galerkin reduction of a nonlocal IDE---providing an explicit and interpretable pathway for long-range spatial dependence.
\end{compactitem}

We validate NLBST on synthetic dynamics and real-world environmental datasets, demonstrating competitive forecasting accuracy, reliable spatial generalization to unmeasured locations, and well-calibrated predictive uncertainty---particularly excelling in settings with nonlocal interactions and irregular temporal sampling. The code for NLBST is available at \url{https://github.com/JaeyeongLee1/NLBST}.

\section{Methodology}
\label{sec:model}

\subsection{Problem Definition}

Let $D \subset \R^d$ be a spatial domain. We observe measurements at a set of sensor locations $\cS_m = \{x^m_i\}_{i=1}^{S_m} \subset D$ and consider a set of unmeasured locations $\cS_u = \{x^u_j\}_{j=1}^{S_u} \subset D$. 
Let $\{t_k\}_{k=1}^{T}$ denote the observation times (not necessarily equally spaced), and let $\by_k \in \R^{S_m}$ denote the vector of measurements at time $t_k$ across the measured locations.
We allow missing entries in $\by_k$ due to sensor failure, represented by a binary mask $\bdelta_k \in \{0,1\}^{S_m}$, where $\bdelta_{k,i}=1$ indicates that $y_{k,i}$ is missing. 
Our goals are:
\begin{compactitem}
    \item[(i)] \textbf{Forecasting}: predict future values at measured locations for $t > t_T$;
    \item[(ii)] \textbf{Spatial prediction}: infer the signal at $x \in \cS_u$ without retraining;
    \item[(iii)] \textbf{Uncertainty quantification}: provide calibrated predictive uncertainty for both tasks.
\end{compactitem}
To achieve these goals, we adopt a three-level Bayesian hierarchy comprising a data model, a process model, and a parameter model.

\subsection{Level 1: Data Model}

Let $y(x,t)$ be an observation at coordinate $x \in D$ and time $t$.
We assume the observations are noisy measurements of an underlying spatio-temporal latent process:
\begin{equation}
\label{eq:data_model_general}
y(x,t) = \mu(x,t) + \epsilon(x,t),
\qquad
\epsilon(x,t) \stackrel{\mathrm{iid}}{\sim} \cN(0,\sigma_{\mathrm{obs}}^2),
\end{equation}
where $\mu(x,t)$ is a latent spatio-temporal field and $\sigma_{\mathrm{obs}}^2$ is the observation noise variance.

\textbf{Vector form at measured locations.}
At time $t_k$, let $\cS_k \subseteq \cS_m$ denote the set of locations with available measurements.
Stacking the observations yields
%\vspace{-0.5em}
\begin{equation}
\label{eq:data_model_stack}
\by_k = \bmu_k + \beps_k,
\qquad
\beps_k \sim \cN(\mathbf{0},\sigma_{\mathrm{obs}}^2 I_{n_k}),
\end{equation}
where $n_k = |\cS_k|$ and $\bmu_k = \big(\mu(x,t_k)\big)_{x \in \cS_k}$.

\subsection{Level 2: Process Model}

We specify the latent process $\mu(x,t)$ through a spatial basis representation with continuous-time coefficient dynamics. This coupled spatio-temporal structure addresses the challenges of inductive spatial prediction and irregular temporal sampling, while naturally enabling a learnable nonlocal interaction mechanism.

\subsubsection{Spatial Basis Representation}

To enable prediction at arbitrary, unmeasured locations, we represent the latent field as a basis-function expansion over continuous coordinates:
\begin{equation}
\label{eq:basis_decomp}
\mu(x,t) = \bphi(x)^\top \bz(t),
\end{equation}
where $\bphi(x) = (\phi_1(x), \ldots, \phi_K(x))^\top \in \R^K$ is a vector of $K$ spatial basis functions evaluated at location $x \in D$, and $\bz(t) \in \R^K$ is a time-varying coefficient vector.

\subsubsection{Continuous-Time Coefficient Dynamics}

To handle irregular temporal sampling, we model the coefficient process in continuous time via an ODE:
%\vspace{-0.5em}
\begin{equation}
\label{eq:coeff_ode_full}
\frac{d\bz(t)}{dt} = A_{\mathrm{NL}} \bz(t) + g_\theta(\bz(t), t),
\end{equation}
where $A_{\mathrm{NL}} \in \R^{K \times K}$ is a learnable linear matrix and $g_\theta: \R^K \times \R \to \R^K$ is a neural network parameterized by $\theta$. 
This continuous-time formulation naturally accommodates irregular observation times: given timestamps $\{t_k\}_{k=1}^T$, we obtain the coefficient state at time $t_k$ by integrating the ODE from the previous state:
%\vspace{-0.5em}
\begin{equation}
\label{eq:ode_flow}
\hat{\bz}_k = \mathrm{ODESolve}\!\left(\bz \mapsto A_{\mathrm{NL}}\bz + g_\theta(\bz,t),\; \bz_{k-1},\; t_{k-1},\; t_k\right).
\end{equation}
The integration adapts to the arbitrary time increment $\Delta t_k = t_k - t_{k-1}$ rather than requiring a fixed step size, eliminating the need for imputation.
Details of the state update mechanism is provided in Sec.~\ref{sec:inference}.

\subsubsection{Induced Nonlocal Coupling Structure}

The proposed coupled spatio-temporal modeling induces an explicit nonlocal interaction structure in field space. 
To make this precise, we express the temporal evolution of the field $\mu(x,t)$ in terms of the coefficient dynamics. From Eq.\eqref{eq:basis_decomp} and Eq.\eqref{eq:coeff_ode_full}, the linear component of the field evolution is:
\begin{equation}
\label{eq:field_evolution_linear}
\partial_t \mu(x,t) \big|_{\text{linear}} = \bphi(x)^\top A_{\mathrm{NL}} \bz(t).
\end{equation}
Under the spatial basis representation in Eq.\eqref{eq:basis_decomp}, the coefficients are given by
\begin{equation}
\label{eq:galerkin_coeff}
\bz(t) = M^{-1} \int_D \bphi(x') \mu(x',t) \, dx',
\end{equation}
where $M \in \R^{K \times K}$ is the mass matrix with entries $M_{ij} = \int_D \phi_i(x) \phi_j(x) \, dx$. 
Substituting Eq.\eqref{eq:galerkin_coeff} into Eq.\eqref{eq:field_evolution_linear} yields:
\begin{align}
\partial_t \mu(x,t) \big|_{\text{linear}} 
&= \bphi(x)^\top A_{\mathrm{NL}} M^{-1} \int_D \bphi(x') \mu(x',t) \, dx' \nonumber \\
&= \int_D \underbrace{\bphi(x)^\top A_{\mathrm{NL}} M^{-1} \bphi(x')}_{G(x,x')} \mu(x',t) \, dx'. \label{eq:effective_kernel}
\end{align}
This has precisely the form of a nonlocal IDE:
\begin{equation}
\label{eq:ide_field}
\partial_t \mu(x,t) = \int_D G(x,x') \mu(x',t) \, dx' + \cF[\mu](x,t),
\end{equation}
where the induced kernel $G(x,x') {=} \bphi(x)^\top A_{\mathrm{NL}} M^{-1} \bphi(x')$ encodes nonlocal spatial interactions, and the neural residual $g_\theta$ corresponds to the additional dynamics $\cF$. 
Therefore, learning the coefficient-space matrix $A_{\mathrm{NL}}$ is equivalent to learning a finite-rank nonlocal kernel $G$ in field space. 
From this perspective, the proposed model can be viewed as a Galerkin reduction of a nonlocal IDE.

\subsubsection{Stochastic State-Space Formulation}

To account for model mismatch and latent process uncertainty, we model the coefficient dynamics as an additive-noise diffusion:
\begin{align}
\label{eq:sde_coeff_em}
d\bz(t) &= f(\bz(t),t)\,dt + \sigma_{\mathrm{proc}}\, d\bW_t, \nonumber\\
f(\bz,t) &= A_{\mathrm{NL}}\bz + g_\theta(\bz,t),
\end{align}
where $\bW_t$ is a $K$-dimensional Wiener process. 
Given irregular observation times $\{t_k\}$, we use a first-order diffusion discretization to define a tractable Gaussian transition approximation:
\begin{align}
\label{eq:em_transition}
\hat{\bz}_k
&= \mathrm{ODESolve}\!\big(\bz \mapsto f(\bz,t),\, \bz_{k-1},\, t_{k-1},\, t_k\big), \nonumber\\
\bz_k \mid \bz_{k-1}
&\approx \cN\!\left(\hat{\bz}_k,\; \sigma_{\mathrm{proc}}^2 \Delta t_k \, I_K \right).
\end{align}
The isotropic diffusion covariance $\sigma_{\mathrm{proc}}^2 \Delta t_k I_K$ is a simplifying choice that yields a stable and efficient continuous-discrete filtering scheme under irregular sampling.
Since $\sigma_{\mathrm{proc}}$ is learned within the variational objective, the effective process uncertainty is calibrated end-to-end to best explain the data likelihood under the chosen variational family.

\subsubsection{Spatially Inductive Prediction}

The basis representation enables prediction at arbitrary, unmeasured locations $\cS_u$.
Let $\bPhi(\cS_u) \in \R^{S_u \times K}$ be the basis matrix evaluated at $\cS_u$.
The predictive distribution at $\cS_u$ is obtained via linear uncertainty propagation:
\begin{align}
\E[\bmu_u(t) \mid \cD] &= \bPhi(\cS_u) \, \E[\bz(t) \mid \cD], \label{eq:unmeasured_mean} \\
\Cov(\bmu_u(t) \mid \cD) &= \bPhi(\cS_u) \, \Cov(\bz(t) \mid \cD) \, \bPhi(\cS_u)^\top. \label{eq:unmeasured_cov}
\end{align}
Including observation noise, the predictive covariance becomes $\Cov(\by_u(t) \mid \cD) = \Cov(\bmu_u(t) \mid \cD) + \sigma_{\mathrm{obs}}^2 I_{S_u}$.

\subsubsection{Measurement Model in State-Space Form}
\label{sec:filter_overview}

At observation time $t_k$, combining Eq.\eqref{eq:data_model_stack} 
with Eq.\eqref{eq:basis_decomp} yields the linear-Gaussian measurement model:
\begin{equation}
\label{eq:meas_coeff}
\by_k = \bPhi_k \bz_k + \beps_k,
\quad
\beps_k \sim \cN(\mathbf{0}, \sigma_{\mathrm{obs}}^2 I_{n_k}),
\end{equation}
where $\bPhi_k := \bPhi(\cS_k)$ contains rows $\bphi(x)^\top$ for observed 
locations $x \in \cS_k$.
Since $\by_k$ depends linearly on $\bz_k$ with additive Gaussian noise, the filtering update for $p(\bz_k \mid \by_{1:k})$ admits a closed-form Kalman-style conditioning at each observation time, naturally accommodating irregular sampling and missingness. Details of the full predict--update recursion and parameter learning are provided in Sec.~\ref{sec:inference}.

\subsection{Level 3: Parameter Model}

Let $\Theta$ denote all model parameters: the nonlocal coupling matrix $A_{\mathrm{NL}} \in \R^{K \times K}$, Neural ODE weights $\theta$, noise scales $\sigma_{\mathrm{obs}}$ and $\sigma_{\mathrm{proc}}$, initial state $\bz_0$, and spatial basis parameters (centers $\{c_r\}$ and lengthscale $\ell$ when learnable). 
The structural parameters $A_{\mathrm{NL}}$, $\theta$, and the spatial basis parameters are point-estimated via an evidence lower bound (ELBO) objective, while we place weakly informative priors on the remaining parameters:
\begin{align}
\sigma_{\mathrm{obs}} &\sim \mathrm{LogNormal}(\mu_{\mathrm{obs}}, 
  \tau_{\mathrm{obs}}^2), \\
\sigma_{\mathrm{proc}} &\sim \mathrm{LogNormal}(\mu_{\mathrm{proc}}, 
  \tau_{\mathrm{proc}}^2), \\
\bz_0 &\sim \cN(\mathbf{0}, \sigma_0^2 I_K).
\end{align}

\section{Inference}
\label{sec:inference}

Given observations $\cD=\{(t_k,\by_k)\}_{k=1}^{T}$ at measured locations, with missingness indicators $\{\bdelta_k\}$, our goal is to approximate the posterior over the latent coefficient states $\{\bz_k\}_{k=1}^{T}$ and learn the model parameters. 
Our model combines a continuous-time transition---comprising a linear nonlocal component and a nonlinear Neural ODE residual---with a linear-Gaussian measurement model.
This structure enables analytic updates in the observation space while retaining flexible nonlinear dynamics.

In practice, we adopt a Kalman-style variational approximation for the latent states:
\begin{multline}
q(\bz_{1:T}, \sigma_{\mathrm{obs}}, \sigma_{\mathrm{proc}}) 
= q(\sigma_{\mathrm{obs}})\,q(\sigma_{\mathrm{proc}})\\
\times\prod_{k=1}^{T} q(\bz_k \mid \cD_{1:k}, \sigma_{\mathrm{obs}}, \sigma_{\mathrm{proc}}),
\end{multline}
where each $q(\bz_k \mid \cdot)$ is a Gaussian distribution whose moments are obtained via a closed-form predict--update recursion implied by the linear-Gaussian measurement model in Eq.\eqref{eq:meas_coeff}.
The global noise scales $(\sigma_{\mathrm{obs}},\sigma_{\mathrm{proc}})$ are treated as latent variables with variational distributions.
The nonlocal kernel matrix $A_{\mathrm{NL}}$, Neural ODE parameters $\theta$, and basis parameters are learned via gradient-based optimization using stochastic variational inference (SVI).

\subsection{Variational Filtering for Latent States}
\label{sec:structured_filtering}

Let $\bz_k := \bz(t_k)$ denote the latent coefficient state at time $t_k$.
Conditioned on $(\sigma_{\mathrm{obs}},\sigma_{\mathrm{proc}})$ and the model parameters, 
\[
q(\bz_k \mid \cD_{1:k}) \approx \cN(\bm_k^+,\bP_k^+),
\]
where $(\bm_k^+,\bP_k^+)$ are obtained by alternating a continuous-time prediction step with an analytic observation update. Latent samples $\bz_k$ drawn from the filtered posterior are used to evaluate the ELBO (Eq.\eqref{eq:elbo_inference}). 
Algorithm~\ref{alg:filtering} summarizes the variational filtering procedure with analytic measurement updates.

\textbf{Predict.}
Let $(\bm_{k-1}^+,\bP_{k-1}^+)$ denote the filtered moments at time $t_{k-1}$.
The predicted mean is obtained by integrating the coefficient dynamics (Eq.\eqref{eq:coeff_ode_full}):
\begin{multline}
\label{eq:q_predict_mean}
\bm_k^- = \mathrm{ODESolve}\!\big(\bz \mapsto A_{\mathrm{NL}}\bz + g_\theta(\bz, t),\\
 \bm_{k-1}^+,\; t_{k-1},\; t_k\big).
\end{multline}
To propagate uncertainty under irregular sampling, we adopt a first-order diffusion approximation within the Gaussian variational family:
%\vspace{-0.5em}
\begin{equation}
\label{eq:q_predict_cov}
\bP_k^- \approx \bP_{k-1}^+ + \sigma_{\mathrm{proc}}^2 \Delta t_k\, I_K,
\end{equation}
where $\Delta t_k = t_k - t_{k-1}$.
This approximation treats the deformation of covariance induced by the nonlinear drift as additional process noise; in practice, $\sigma_{\mathrm{proc}}$ is learned end-to-end via the variational objective to best explain the observed data under the chosen variational family.

\textbf{Update.}
At time $t_k$, let $\cS_k$ denote the subset of locations with available measurements (determined by the missingness indicator $\bdelta_k$).
Under Eq.\eqref{eq:meas_coeff}, the variational update follows the standard Kalman correction:
\begin{align}
\label{eq:q_kalman_gain}
\bK_k &= \bP_k^- \bPhi_k^\top
\!\left(\bPhi_k \bP_k^- \bPhi_k^\top + \sigma_{\mathrm{obs}}^2 I_{n_k}\right)^{-1}\!\!,\\
\label{eq:q_kalman_mean}
\bm_k^+ &= \bm_k^- + \bK_k\left(\by_k - \bPhi_k\bm_k^-\right),\\
\label{eq:q_kalman_cov}
\bP_k^+ &= \left(I - \bK_k \bPhi_k\right)\bP_k^-.
\end{align}
Missing observations are handled naturally by restricting the update to the observed subset $\cS_k$.

\textbf{Sample.}
After obtaining $(\bm_k^+,\bP_k^+)$, we draw
\begin{equation}
\label{eq:q_sample_latent}
\bz_k \sim q(\bz_k \mid \cD_{1:k}) = \cN(\bm_k^+,\bP_k^+),
\end{equation}
to estimate Monte Carlo expectations in the ELBO and to construct posterior predictive distributions (Sec.~\ref{sec:posterior_predictive}).

\subsection{Initialization from the First Observation}
\label{sec:init_posterior}

We construct an initial approximate posterior from the first observation via Bayesian linear regression, combining the prior $\bz_0 \sim \cN(\mathbf{0}, \sigma_0^2 I_K)$ with the first measurement.
Given $\by_1$ and $\bPhi_1 := \bPhi(\cS_1)$, the posterior is
\begin{align}
\bP_1^+ &= \left( \sigma_0^{-2} I_K + \sigma_{\mathrm{obs}}^{-2} \bPhi_1^\top \bPhi_1 \right)^{-1}\!\!, \label{eq:init_cov} \\
\bm_1^+ &= \sigma_{\mathrm{obs}}^{-2} \bP_1^+ \bPhi_1^\top \by_1, \label{eq:init_mean}
\end{align}
which corresponds to the standard Gaussian posterior under the linear observation model (Eq.\eqref{eq:meas_coeff}) with prior covariance $\sigma_0^2 I_K$.
This initialization is consistent with the Kalman update Eq.\eqref{eq:q_kalman_gain}--Eq.\eqref{eq:q_kalman_cov} applied to a prior predictive $\cN(\mathbf{0}, \sigma_0^2 I_K)$.

\subsection{Variational Learning of Global Parameters}
\label{sec:vi_global}

We learn global parameters by maximizing an ELBO via SVI. We use the variational family
\[
  q(\bz_{1:T},\sigma_{\mathrm{obs}},\sigma_{\mathrm{proc}})
  =
  q(\sigma_{\mathrm{obs}})\,q(\sigma_{\mathrm{proc}})\,
  q(\bz_{1:T}\mid\sigma_{\mathrm{obs}},\sigma_{\mathrm{proc}}),
\]
where $q(\bz_{1:T}\mid\cdot)$ is the structured Gaussian posterior induced by the filtering recursion (Sec. ~\ref{sec:structured_filtering}).
The ELBO is
\begin{equation}
\label{eq:elbo_inference}
  \cL_{\mathrm{ELBO}}
  =
  \E_{q}\!\Big[
    \log p(\cD,\bz_{1:T},\sigma)
    -
    \log q(\bz_{1:T},\sigma)
  \Big],
\end{equation}
where $\sigma{=}(\sigma_{\mathrm{obs}},\sigma_{\mathrm{proc}})$. The joint model factorizes according to the state-space structure in~Eq.\eqref{eq:em_transition} and Eq.\eqref{eq:meas_coeff}, with likelihoods evaluated only at observed entries.

\begin{algorithm}[t]
\caption{Variational filtering with analytic measurement update}
\label{alg:filtering}
\begin{algorithmic}[1]
\STATE \textbf{Input:} Observations $\{(t_k,\by_k)\}_{k=1}^{T}$; basis matrices $\{\bPhi_k\}$; coupling $A_{\mathrm{NL}}$; residual $g_\theta$; noise $(\sigma_{\mathrm{obs}},\sigma_{\mathrm{proc}})$; prior $\sigma_0^2$.
\STATE \textbf{Output:} Filtered $\{(\bm_k^+,\bP_k^+)\}$ and samples $\{\bz_k\}$.

\STATE \textbf{(Initialization)}
\STATE $\bP_1^+ \!\leftarrow\! \left( \sigma_0^{-2} I_K + \sigma_{\mathrm{obs}}^{-2} \bPhi_1^\top \bPhi_1 \right)^{-1}$
\STATE $\bm_1^+ \leftarrow \sigma_{\mathrm{obs}}^{-2} \bP_1^+ \bPhi_1^\top \by_1$
\STATE Sample $\bz_1 \sim \cN(\bm_1^+,\bP_1^+)$

\FOR{$k = 2$ \TO $T$}
    \STATE \textbf{(Predict)}
    \STATE $\Delta t_k \leftarrow t_k - t_{k-1}$
    \STATE $\bm_k^- \!\leftarrow\! \mathrm{ODESolve}\!\big(A_{\mathrm{NL}} \cdot {+}\, g_\theta,\, \bm_{k-1}^+,\, t_{k-1},\, t_k\big)$
    \STATE $\bP_k^- \leftarrow \bP_{k-1}^+ + \sigma_{\mathrm{proc}}^2 \Delta t_k \, I_K$

    \STATE \textbf{(Update)}
    \STATE $\bK_k \!\leftarrow\! \bP_k^- \bPhi_k^\top
    \!\left(\bPhi_k \bP_k^- \bPhi_k^\top {+}\, \sigma_{\mathrm{obs}}^2 I_{n_k}\right)^{-1}$
    \STATE $\bm_k^+ \leftarrow \bm_k^- + \bK_k\left(\by_k - \bPhi_k \bm_k^-\right)$
    \STATE $\bP_k^+ \leftarrow \left(I_K - \bK_k \bPhi_k\right)\bP_k^-$

    \STATE \textbf{(Sample)}
    \STATE Sample $\bz_k \sim \cN(\bm_k^+,\bP_k^+)$
\ENDFOR

\STATE \textbf{Return} $\{(\bm_k^+,\bP_k^+)\}_{k=1}^{T}$, $\{\bz_k\}_{k=1}^{T}$
\end{algorithmic}
\end{algorithm}

\subsection{Posterior Predictive Distribution}
\label{sec:posterior_predictive}

\textbf{Temporal forecasting.}
For forecasting at measured locations, we first run the filtering procedure over a historical context window to obtain the filtered posterior moments $(\bm_T^+,\bP_T^+)$ at the final observed time.
We then propagate the latent state forward by integrating the dynamics (Eq.\eqref{eq:coeff_ode_full}) and injecting process noise according to Eq.\eqref{eq:em_transition}, without measurement updates.
Predictive uncertainty is obtained by Monte Carlo sampling: we draw  $(\sigma_{\mathrm{obs}},\sigma_{\mathrm{proc}})$ from the variational posterior, sample $\bz(T)$ from $\cN(\bm_T^+,\bP_T^+)$, roll forward the latent dynamics via $A_{\mathrm{NL}}\bz + g_\theta(\bz, t)$, and decode the resulting latent trajectories into the observation space.

\textbf{Spatial prediction.}
For unmeasured locations $\cS_u$, we evaluate the basis matrix $\bPhi(\cS_u)$ and decode the same latent coefficients as in Eq.\eqref{eq:unmeasured_mean}. Predictive moments at $\cS_u$ are then obtained via linear uncertainty propagation (Eq.\eqref{eq:unmeasured_mean}--Eq.\eqref{eq:unmeasured_cov}).

\section{Theoretical Guarantees}
\label{sec:theory}

\textbf{Continuum view of \textsc{NLBST}.}
We interpret \textsc{NLBST} as a Galerkin approximation of the $\cH{=}L^2(D)$-valued stochastic nonlocal evolution: 
\begin{equation}\label{eq:spde}
d\mu(t) = \bigl(\cG\mu(t)+\cF_\theta(t,\mu(t))\bigr)\,dt
        + \Sigma(t,\mu(t))\,dW_t,
\end{equation}
where $(\cG u)(x){=}\int_D G(x,x')u(x')dx'$ is a Hilbert--Schmidt integral operator, $\cF_\theta$ is a pointwise nonlinear drift, and $W_t$ is a $Q$-Wiener process on $\cH$.
Projecting Eq.\eqref{eq:spde} onto $V_K{=}\mathrm{span}\{\phi_1,\dots,\phi_K\}$ yields the coefficient dynamics in  Eq.\eqref{eq:sde_coeff_em}.

\textbf{Assumptions.}
We assume globally Lipschitz drift and diffusion with linear growth, $\mathrm{Tr}(Q){<}\infty$, and a square-integrable initial condition (Assumptions~\ref{ass:bounded}--\ref{ass:initial} in  Appendix~\ref{app:theory}).

\begin{theorem}[Well-posedness]\label{thm:wellposed_main}
Under Assumptions~\ref{ass:bounded}--\ref{ass:initial}, Eq.\eqref{eq:spde} admits a unique mild solution
$\mu\in L^2(\Omega;C([0,T];\cH))$, where $\Omega$ is the underlying probability space.
\end{theorem}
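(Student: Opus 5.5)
The plan is the classical Da Prato--Zabczyk fixed-point argument for mild solutions. Since $\cG$ is Hilbert--Schmidt, it is a bounded linear operator on $\cH$ with $\|\cG\|\le \|G\|_{L^2(D\times D)}$. It therefore generates the uniformly continuous group $S(t)=e^{t\cG}$, with $\|S(t)\|\le e^{t\|\cG\|}\le e^{T\|\cG\|}=:M_T$ for $t\in[0,T]$.

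A mild solution is a predictable $\cH$-valued process satisfying
\begin{equation*}
\mu(t)=S(t)\mu_0+\int_0^t S(t-s)\cF_\theta(s,\mu(s))\,ds+\int_0^t S(t-s)\Sigma(s,\mu(s))\,dW_s .
\end{equation*}
Because $\cG$ is bounded, this is equivalent to the strong (integral) formulation. I will use the mild form anyway, since it is what the statement asks for.

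\textbf{Step 1: the contraction map.} Let $\mathcal{X}_T$ be the Banach space of predictable processes $\mu$ with $\|\mu\|_{\mathcal{X}}^2:=\E\sup_{t\le T}\|\mu(t)\|_\cH^2<\infty$. Define $\Gamma(\mu)(t)$ as the right-hand side above. I need $\Gamma$ to map $\mathcal{X}_T$ into itself:
\begin{compactitem}
\item The initial term is bounded by $M_T^2\E\|\mu_0\|^2$, using the square-integrable initial condition.
\item The drift term is handled by Cauchy--Schwarz in time together with linear growth.
\item The stochastic convolution is controlled in $\E\sup_t$ as described next.
\end{compactitem}

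\textbf{Step 2: the stochastic convolution, which I expect to be the main obstacle.} For a general semigroup one needs the factorization method or a maximal inequality for stochastic convolutions. Here $S$ is a group, so I can write
\begin{equation*}
\int_0^t S(t-s)\Sigma\,dW_s=S(t)\int_0^t S(-s)\Sigma\,dW_s .
\end{equation*}
This reduces the problem to the Burkholder--Davis--Gundy (or Doob) inequality for the Hilbert-space martingale $\int_0^t S(-s)\Sigma\,dW_s$. It gives
\begin{equation*}
\E\sup_{t\le T}\Big\|\int_0^t S(t-s)\Sigma(s,\mu(s))\,dW_s\Big\|^2\le C M_T^4\,\E\int_0^T\|\Sigma(s,\mu(s))\|_{L_2^0}^2\,ds,
\end{equation*}
where $L_2^0$ denotes the Hilbert--Schmidt operators from $Q^{1/2}\cH$ to $\cH$. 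The right-hand side is then bounded by linear growth. This step is where $\mathrm{Tr}(Q)<\infty$ enters: it ensures, for instance, that a bounded $\Sigma$ lies in $L_2^0$ with $\|\Sigma\|_{L_2^0}^2\le\|\Sigma\|^2\mathrm{Tr}(Q)$. I will take the Lipschitz and linear-growth assumptions on $\Sigma$ in this $L_2^0$ norm, and check that Assumptions~\ref{ass:bounded}--\ref{ass:initial} are stated that way.

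\textbf{Step 3: contraction and conclusion.} Apply the same estimates to $\Gamma(\mu)-\Gamma(\nu)$ using the global Lipschitz constant $L$:
\begin{equation*}
\E\sup_{t\le T_0}\|\Gamma\mu-\Gamma\nu\|^2\le C(M_T,L,\mathrm{Tr}\,Q)\,(T_0^2+T_0)\,\E\sup_{t\le T_0}\|\mu-\nu\|^2 .
\end{equation*}
Choosing $T_0$ small makes $\Gamma$ a contraction on $\mathcal{X}_{T_0}$, and Banach's fixed-point theorem gives a unique solution there. Since $T_0$ does not depend on the initial datum, I can iterate on $[T_0,2T_0],\dots$ to reach $[0,T]$. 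Alternatively, I can use the weighted norm $\E\sup_t e^{-\beta t}\|\cdot\|^2$ with $\beta$ large to get a contraction in one shot.

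Two facts remain. Uniqueness within the class of mild solutions follows from the same Lipschitz estimate combined with Gronwall. Path continuity follows because $\mu_0$ term and drift convolution are continuous (as $S$ is norm-continuous), and the stochastic integral has continuous paths. Together these give $\mu\in L^2(\Omega;C([0,T];\cH))$.
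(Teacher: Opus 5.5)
Your proposal is correct and follows essentially the same route as the paper: Banach's fixed-point theorem for the mild map on $L^2(\Omega;C([0,\tau];\cH))$ using $\|S(t)\|\le M_T$, Cauchy--Schwarz for the drift convolution, BDG for the stochastic convolution, a contraction for small $\tau$ whose constant does not depend on the initial datum, and iteration over successive intervals. Your factorization $S(t-s)=S(t)S(-s)$ is a worthwhile refinement: it turns the stochastic convolution into $S(t)$ applied to a genuine martingale before BDG is used, whereas the paper applies BDG directly to $\int_0^t S(t-s)\Delta\Sigma(s)\,dW_s$, which is not a martingale in $t$. You also check the self-map and path-continuity steps that the paper leaves implicit. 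Drop the one-shot weighted-norm aside, though: with $\E\sup_t e^{-\beta t}\|\cdot\|^2$, the standard BDG estimate of the stochastic term gives a factor of order $T$, not one that vanishes as $\beta\to\infty$, so keep the small-interval iteration.
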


\begin{theorem}[Stability]\label{thm:stability_main}
Under Assumptions~\ref{ass:bounded}--\ref{ass:initial}, for any two mild solutions
$\mu,\tilde\mu$ to Eq.\eqref{eq:spde} with initial conditions $\mu_0,\tilde\mu_0$, there exists
$C_T>0$ such that
\[
\E\Big[\sup_{t\in[0,T]}\|\mu(t)-\tilde\mu(t)\|^2\Big]
\le C_T\,\E\big[\|\mu_0-\tilde\mu_0\|^2\big].
\]
\end{theorem}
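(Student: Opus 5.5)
The plan is the standard Gronwall argument for mild solutions, combined with a stochastic convolution maximal inequality.

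\textbf{Setup.} Since $\cG$ is Hilbert--Schmidt, it is bounded on $\cH$. It therefore generates the uniformly continuous semigroup $S(t)=e^{t\cG}$, with $\sup_{t\le T}\|S(t)\|\le M_T:=e^{T\|\cG\|}$. Subtracting the mild formulations of $\mu$ and $\tilde\mu$ (which exist by Theorem~\ref{thm:wellposed_main}) gives, with $e(t)=\mu(t)-\tilde\mu(t)$,
\[
e(t)=S(t)e_0+\int_0^t S(t-s)\bigl(\cF_\theta(s,\mu(s))-\cF_\theta(s,\tilde\mu(s))\bigr)ds
+\int_0^t S(t-s)\bigl(\Sigma(s,\mu(s))-\Sigma(s,\tilde\mu(s))\bigr)dW_s .
\]
Using $(a+b+c)^2\le 3(a^2+b^2+c^2)$, I will bound each term inside $\E\sup_{t\le r}\|\cdot\|^2$ for $r\in[0,T]$. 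The initial term is at most $M_T^2\,\E\|e_0\|^2$. For the drift term, Cauchy--Schwarz and the global Lipschitz constant $L$ give the bound $M_T^2 T L^2\int_0^r \E\sup_{u\le s}\|e(u)\|^2 ds$.

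\textbf{Main obstacle: the stochastic convolution.} The stochastic convolution term is not a martingale in $t$, so Doob's inequality does not apply directly. Because $S$ is a group here ($\cG$ bounded), I will factor $S(t-s)=S(t)S(-s)$. This rewrites the term as $S(t)\int_0^t S(-s)(\Sigma(s,\mu)-\Sigma(s,\tilde\mu))dW_s$, which is $S(t)$ applied to a genuine martingale. The Burkholder--Davis--Gundy inequality (or Doob's $L^2$ inequality) together with the Itô isometry in the $L_2^0=L_2(Q^{1/2}\cH,\cH)$ norm then gives
\[
\E\sup_{t\le r}\Bigl\|\int_0^t S(t-s)(\cdots)\,dW_s\Bigr\|^2\le 4M_T^4 L^2\int_0^r\E\sup_{u\le s}\|e(u)\|^2 ds .
\]
Here the diffusion Lipschitz assumption is used in the $L_2^0$ norm, which is where $\mathrm{Tr}(Q)<\infty$ enters. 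If Assumptions~\ref{ass:bounded}--\ref{ass:initial} instead phrase the Lipschitz condition in operator norm, then $\|\cdot\|_{L_2^0}^2\le \mathrm{Tr}(Q)\|\cdot\|^2$ converts it. As a fallback, the factorization method of Da Prato--Zabczyk would handle a general semigroup.

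\textbf{Conclusion.} Define $\psi(r)=\E\sup_{t\le r}\|e(t)\|^2$. This quantity is finite because both solutions lie in $L^2(\Omega;C([0,T];\cH))$ by Theorem~\ref{thm:wellposed_main}. The three bounds combine to
\[
\psi(r)\le 3M_T^2\E\|e_0\|^2+3L^2(M_T^2T+4M_T^4)\int_0^r\psi(s)\,ds .
\]
Gronwall's inequality then yields the claim with $C_T=3M_T^2\exp\bigl(3L^2(M_T^2T+4M_T^4)T\bigr)$.
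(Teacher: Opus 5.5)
Your argument is correct and follows the paper's proof: subtract the mild forms, split with $(a+b+c)^2\le 3(a^2+b^2+c^2)$, bound the drift with $\|S(t)\|\le M_T$ and Cauchy--Schwarz, bound the noise with a maximal inequality, apply the Lipschitz bounds, and close with Gr\"onwall. The one difference is that you write $S(t-s)=S(t)S(-s)$ so that Doob/BDG is applied to a genuine martingale, whereas the paper applies BDG directly to $\int_0^r S(r-s)\bigl(\Sigma(s,\mu(s))-\Sigma(s,\tilde\mu(s))\bigr)\,dW_s$, which is not a martingale in $r$; your factorization makes that step rigorous at the harmless cost of $M_T^4$ in place of $M_T^2$, and the diffusion constant there should be $L_\Sigma^2$ from Assumption~\ref{ass:q_wiener} rather than $L^2$.
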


\begin{theorem}[Galerkin convergence]\label{thm:galerkin_main}
Assume additionally that $\overline{\cup_{K\ge1}V_K}{=}\cH$ (Assumption~\ref{ass:approx}).
Let $\mu_K$ denote the Galerkin solution (Eq.\eqref{eq:galerkin_projected} in Appendix~\ref{app:theory}).
Then
\[
\lim_{K\to\infty}\E\Big[\sup_{t\in[0,T]}\|\mu(t)-\mu_K(t)\|^2\Big]=0.
\]
\end{theorem}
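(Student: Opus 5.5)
The plan is to compare $\mu$ with $\mu_K$ through the orthogonal projection $P_K$ onto $V_K$. The Galerkin equation (Eq.\eqref{eq:galerkin_projected}) is presumably
\[
d\mu_K=P_K\bigl(\cG\mu_K+\cF_\theta(t,\mu_K)\bigr)\,dt+P_K\Sigma(t,\mu_K)\,dW_t,
\qquad \mu_K(0)=P_K\mu_0 .
\]
Since $\cG$ is bounded, no semigroup is needed: the mild solution of Theorem~\ref{thm:wellposed_main} is a strong (Itô integral) solution. We split
\[
\mu-\mu_K=(I-P_K)\mu+\bigl(P_K\mu-\mu_K\bigr)=:\rho_K+e_K .
\]
Step one is to show $\E\sup_t\|\rho_K(t)\|^2\to0$. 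Step two is a Gronwall estimate for $e_K$ in which $\rho_K$ and the other projection errors enter only as source terms.

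For $e_K$, apply $P_K$ to Eq.\eqref{eq:spde} and subtract the Galerkin equation. Since $e_K(0)=0$,
\[
e_K(t)=\int_0^tP_K\bigl[\cG\mu-\cG\mu_K+\cF_\theta(s,\mu)-\cF_\theta(s,\mu_K)\bigr]ds+\int_0^tP_K\bigl[\Sigma(s,\mu)-\Sigma(s,\mu_K)\bigr]dW_s .
\]
We bound the supremum as follows:
\begin{itemize}
\item For the drift, use Cauchy--Schwarz in time.
\item For the stochastic integral, use the Burkholder--Davis--Gundy inequality, with $\mathrm{Tr}(Q)<\infty$ and the Lipschitz assumption.
\item Use $\|P_K\|\le1$ and $\|\mu-\mu_K\|\le\|\rho_K\|+\|e_K\|$.
\end{itemize}
This gives
\[
\E\sup_{s\le t}\|e_K(s)\|^2\le C\int_0^t\E\sup_{r\le s}\|e_K(r)\|^2ds+C_T\,\E\sup_{s\le T}\|\rho_K(s)\|^2 .
\]
The constant $C$ is independent of $K$. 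Gronwall then yields $\E\sup\|e_K\|^2\le C'_T\,\E\sup\|\rho_K\|^2$, so the theorem reduces to step one. This is the same argument as in Theorem~\ref{thm:stability_main}, uniform in $K$.

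Step one is the main obstacle, because it needs a supremum in time inside the expectation. Pointwise, $\|(I-P_K)u\|\to0$ for every $u\in\cH$ by Assumption~\ref{ass:approx}, and $\|(I-P_K)u\|\le\|u\|$. Paths of $\mu$ are continuous on $[0,T]$, so $\{\mu(t,\omega):t\in[0,T]\}$ is compact in $\cH$. For uniformly bounded operators, strong convergence is uniform on compact sets, via an $\varepsilon$-net argument. Hence $\sup_t\|(I-P_K)\mu(t,\omega)\|\to0$ for a.e. $\omega$.

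The random variable is dominated by $\sup_t\|\mu(t)\|^2$, which is integrable because $\mu\in L^2(\Omega;C([0,T];\cH))$ by Theorem~\ref{thm:wellposed_main}. Dominated convergence therefore gives $\E\sup_t\|\rho_K\|^2\to0$.

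Monotonicity of $P_K$ in $K$ is used implicitly, assuming the $V_K$ are nested as the union condition suggests. If they are not nested, we would instead use $\mathrm{dist}(u,V_K)\to0$ directly.

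Combining the two steps gives $\E\sup_t\|\mu-\mu_K\|^2\le2(1+C'_T)\,\E\sup_t\|\rho_K\|^2\to0$.
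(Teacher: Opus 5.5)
Your proof is correct, but you organize the error differently from the paper. The paper never splits $\mu-\mu_K$. It writes the integral equation for the full error and decomposes the drift as $(I-P_K)\bigl(\cG\mu+\cF_\theta(s,\mu)\bigr)+P_K\bigl(\cG(\mu-\mu_K)+\cF_\theta(s,\mu)-\cF_\theta(s,\mu_K)\bigr)$, and the noise likewise. Gronwall then gives $\E\sup_t\|\mu-\mu_K\|^2\le\alpha_K(T)e^{\beta T}$. Here $\alpha_K$ collects $\E\|(I-P_K)\mu_0\|^2$ and time integrals of $\E\|(I-P_K)\cG\mu(s)\|^2$, $\E\|(I-P_K)\cF_\theta(s,\mu(s))\|^2$ and $\E\|(I-P_K)\Sigma(s,\mu(s))Q^{1/2}\|_{\mathrm{HS}}^2$. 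Each residual is evaluated at a fixed $(s,\omega)$ and then integrated, so $\alpha_K(T)\to0$ follows from pointwise strong convergence $P_K\to I$ and dominated convergence alone. No supremum in time ever sits inside $I-P_K$.

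Your $\rho_K+e_K$ splitting instead pushes all the approximation error into $\rho_K=(I-P_K)\mu$. That is why you need the extra step to control $\E\sup_t\|\rho_K(t)\|^2$: path continuity from Theorem~\ref{thm:wellposed_main}, compactness of each path, and uniform convergence of bounded, strongly convergent operators on compacts. You carry this step out correctly.

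In return you get a quasi-optimality bound $\E\sup_t\|\mu-\mu_K\|^2\le C\,\E\sup_t\mathrm{dist}(\mu(t),V_K)^2$, with $C$ independent of $K$. So the Galerkin error is controlled by the best-approximation error of the solution itself, and convergence rates transfer directly from the regularity of $\mu$. The paper's bound would instead need regularity of the images $\cG\mu$, $\cF_\theta(\cdot,\mu)$ and $\Sigma(\cdot,\mu)Q^{1/2}$.

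Two small remarks:
\begin{itemize}
\item You should state, as the paper does, that $\mu_K\in L^2(\Omega;C([0,T];\cH))$ exists by the argument of Theorem~\ref{thm:wellposed_main} applied to the projected coefficients. This makes $\E\sup_t\|e_K\|^2$ finite, which Gronwall requires.
\item Your fallback for non-nested $V_K$ does not work, because density of $\bigcup_K V_K$ alone does not give $\mathrm{dist}(u,V_K)\to0$. This is moot here, since $V_K=\mathrm{span}\{\phi_1,\dots,\phi_K\}$ is nested by construction.
\end{itemize}
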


Theorem~\ref{thm:galerkin_main} implies that increasing $K$ systematically improves \textsc{NLBST}'s finite-dimensional approximation, which we verify empirically in Sec.~\ref{sec:experiments} (Table~\ref{tab:rq4_ksweep}). Detailed analysis and proofs are provided in Appendix~\ref{app:theory}.
 
\section{Related Work}
\label{sec:related}

\textbf{Statistical spatio-temporal modeling.}
Statistical spatio-temporal models provide principled uncertainty quantification and naturally accommodate sparse spatial measurements \citep{cressie2015statistics,wikle2019spatio}. Dynamic spatio-temporal models (DSTMs) cast spatio-temporal fields in Markovian state-space form and admit exact Kalman filtering under a linear-Gaussian measurement model \citep{wikle2010general}, while SPDE-based methods link Gaussian fields to stochastic PDEs for scalable inference \citep{lindgren2011explicit,sigrist2015stochastic,clarotto2024spde}. These frameworks often rely on discrete-time transitions or pre-specified (typically local) operators, which can limit expressivity under irregular timestamps and long-range coupling. 

\textbf{Deep spatio-temporal forecasting and neural operators.}
Deep spatio-temporal forecasting models \citep{yu2017spatio, guo2019attention, liu2022msdr, zhang2024irregular} typically couple temporal modules with spatial encoders (e.g., graph convolutions or attention) and achieve strong accuracy when the sensor set and its connectivity are fixed \citep{jin2023spatio}. However, many such approaches are spatially transductive: the spatial component is trained for a specific set of nodes/edges and does not directly support prediction at unseen coordinates without re-parameterization or retraining. Moreover, uncertainty is often quantified via heuristic post-hoc techniques (e.g., ensembles or dropout) rather than through a principled Bayesian treatment. Neural operators learn nonlocal mappings between function spaces and excel in surrogate modeling under dense field supervision \citep{li2020fourier,kovachki2023neural}, but they are not designed for sequential data assimilation from sparse, intermittently missing sensor measurements.

\textbf{Irregular multivariate time series and continuous-time models.}
Irregular multivariate time series (IMTS) forecasting methods \citep{che2018recurrent, rubanova2019latent, de2019gru, scholz2023latent, yalavarthi2024grafiti, zhang2024irregular, liu2026apn} handle missingness and irregular sampling via masking, decay mechanisms, learned time embeddings, or structure-aware aggregation. 
GraFITi \citep{yalavarthi2024grafiti} reformulates irregularly sampled multivariate series with missing values as a sparse bipartite graph and performs prediction via GNN-based edge inference, while APN \citep{liu2026apn} introduces a lightweight patch-based framework with time-aware adaptive patch aggregation to regularize irregular histories before decoding. 
Continuous-time latent dynamics models such as Neural ODEs and CDEs  \citep{chen2018neural,kidger2020neural} naturally accommodate irregular timestamps, and latent ODE/SDE frameworks \citep{rubanova2019latent,li2020scalable} combine these dynamics with variational inference.

\section{Experiments}
\label{sec:experiments}

We evaluate \textsc{NLBST} on synthetic PDE benchmarks and a real-world environmental monitoring dataset.
Our experiments are organized around four research questions:
\begin{compactitem}
    \item \textbf{(RQ1) Forecasting under partial observations:}
    How robust is \textsc{NLBST} to randomly missing measurements at varying rates?

    \item \textbf{(RQ2) Spatially inductive prediction:}
    How well does \textsc{NLBST} generalize to unseen spatial coordinates without retraining?

    \item \textbf{(RQ3) Uncertainty quantification:}
    Are predictive uncertainties well calibrated? %and sharp under observation sparsity?

    \item \textbf{(RQ4) Nonlocal dynamics:}
    Does \textsc{NLBST} improve performance for dynamics dominated by nonlocal coupling, and does performance improve with basis dimension $K$ consistent with Galerkin convergence (Theorem~\ref{thm:galerkin_main})?
\end{compactitem}

% -------------------------------------------------------------------------
\subsection{Datasets}
\label{sec:datasets}

We use two synthetic PDE datasets that provide dense ground-truth fields, along with one real-world environmental monitoring dataset.
For the synthetic datasets, we sample noisy observations at $S$ sensor coordinates and optionally apply missing-at-random masks at rate $\rho_m$.

\textbf{(D1) Advection--diffusion (synthetic).}
We simulate a scalar field $u(x,t)$ over a 2D periodic domain $D=[0,10]^2$ governed by $\frac{\partial u}{\partial t} + \mathbf{v}\cdot \nabla u \;=\; \kappa \,\Delta u + F(x,t)$.

\textbf{(D2) Nonlocal IDE (synthetic).}
We generate fields from the nonlocal IDE
\[
  \partial_t u(x,t)
  = \int_D G_\star(x,x')\,u(x',t)\,dx'
  + \kappa\,\Delta u + f(x,t),
\]
where $G_\star(x,x') = \sum_{p=1}^{R_\star} \lambda_p \psi_p(x)\psi_p(x')$ is a rank-$R_\star$ kernel designed for long-range spatial coupling; we set $R_\star{=}4$. 

\textbf{(D3) EPA PM\textsubscript{2.5} daily (real-world).}
We use daily mean PM\textsubscript{2.5} concentrations from the U.S. Environmental Protection Agency (EPA) Air Quality System (AQS) database.
We select $50$ monitoring stations in the northeastern United States covering $366$ days in 2024, applying a $\log(1{+}y)$ transformation to stabilize variance.
The first 300 days are used for training and the remaining 66 days for evaluation.

% -------------------------------------------------------------------------
\subsection{Evaluation Protocol}
\label{sec:exp_protocol}

\textbf{Rolling-origin forecasting.}
Given a context window length $L$ and forecast horizon $H$, we perform rolling-origin evaluation \citep{tashman2000out} over the test period with stride $r$.
For each window, we run filtering over the context to obtain the terminal belief $(\bm_L^+,\bP_L^+)$, then propagate the latent state forward for $H$ steps without measurement updates. %We report RMSE and CRPS for the evaluation metric. 

\textbf{Missingness setup (RQ1, RQ3).}
On synthetic datasets, we apply i.i.d.\ random missing masks at rates  $\rho_m \in \{10,20,30\}\%$. 
On \textbf{D3}, we use $\rho_m{=}10\%$ in addition to the natural ${\sim}2\%$ missing rate. 

\textbf{Spatial generalization setup (RQ2).}
We designate a fraction of sensor locations as unobserved during training and evaluate predictions at these held-out coordinates without retraining.
On \textbf{D1}, we hold out $\{10,20,30\}\%$ of sensors with $\rho_m{=}0$. 
On \textbf{D3}, we use a $20\%$ holdout. 

\textbf{Nonlocal dynamics setup (RQ4).}
On the nonlocal IDE benchmark (\textbf{D2}), we compare forecasting accuracy under the same rolling-origin protocol and sweep the basis dimension $K\in\{8,12,16,24\}$ to examine how approximation rank affects performance. 

\textbf{Evaluation metrics.} 
We report RMSE and CRPS (Appendix~\ref{app:metrics}). For \textbf{RQ3}, we plot the empirical coverage of $90\%$ prediction intervals and calibration curves. 
We draw $N{=}100$ Monte Carlo samples, and results are averaged over 10 seeds.

% -------------------------------------------------------------------------
\subsection{Baselines}
\label{sec:baselines}

We compare \textsc{NLBST} against baselines spanning discrete-time recurrent models, 
irregular time-series forecasters, 
continuous-time latent dynamics, and neural operators:

\begin{compactitem}
  \item \textbf{Linear-DSTM}~\citep{wikle2010general}\textbf{:}
  A linear state-space model in the same coefficient space as \textsc{NLBST} (shared spatial basis), but with a linear transition matrix replacing the nonlinear Neural ODE dynamics. 

  \item \textbf{GRU-D}~\citep{che2018recurrent}\textbf{:}
  A GRU with trainable exponential decay to handle missing values. 

  \item \textbf{Latent ODE}~\citep{rubanova2019latent}\textbf{:}
  Continuous-time latent dynamics with a VAE-style encoder; uncertainty is obtained via posterior sampling.

  \item \textbf{FNO}~\citep{li2020fourier}\textbf{:}
  A neural operator using spectral convolution along the time axis. %Uncertainty via MC Dropout. 
  \item \textbf{GraFITi}~\citep{yalavarthi2024grafiti}\textbf{:}
  An irregular multivariate time series forecaster that converts partially observed series into a sparse bipartite graph and performs forecasting as an edge-weight prediction task using a graph neural network.
  \item \textbf{APN}~\citep{liu2026apn}\textbf{:}
  A lightweight patch-based model for irregular time series that forms adaptive patches and aggregates them with time-aware patch aggregation. 
\end{compactitem}

\noindent %All baselines share the same train/test splits and rolling-origin evaluation windows. Hyperparameters are tuned on a validation split.
For spatial prediction at unmeasured locations, \textsc{NLBST} and \textsc{Linear-DSTM} use the shared spatial basis expansion.
For the remaining baselines (GRU-D, Latent ODE, FNO, APN, and GraFITi), we apply spatial Gaussian process interpolation to $\cS_u$.
For uncertainty quantification, we use MC Dropout \citep{gal2016dropout} for deterministic neural baselines.

\subsection{Results}
\label{sec:exp_results_overview}

We organize results by research question, integrating findings from the synthetic datasets (\textbf{D1}--\textbf{D2}) and the real-world PM\textsubscript{2.5} dataset (\textbf{D3}).

\textbf{RQ1: Forecasting under partial observations.}
On \textbf{D1}, \textsc{NLBST} achieves the lowest RMSE and CRPS across all missing rates, with stable performance as $\rho_m$ increases from $10\%$ to $30\%$ (Table~\ref{tab:rq1_missingness_main}).
This robustness stems from continuous-time ODE dynamics that bridge missing observations without imputation, combined with Kalman-style updates that restrict conditioning to the observed subset at each time step.
In contrast, FNO degrades sharply with increasing missingness (RMSE $1.65 \to 3.09$), as its spectral convolutions assume a regular temporal grid and are less suited to missing entries. Linear-DSTM exhibits high variance at $\rho_m{=}10\%$ (std $2.05$), with occasional instability in a subset of random seeds.
On \textbf{D3} (Table~\ref{tab:pm25_rq1}), \textsc{NLBST} achieves the lowest RMSE and CRPS at both measured and unmeasured locations under $20\%$ spatial holdout and $10\%$ missingness, improving over Linear-DSTM by ${\sim}8\%$ in RMSE at both measured ($0.48$ vs.\ $0.52$) and unmeasured stations
($0.49$ vs.\ $0.53$). GraFITi is competitive at measured locations (RMSE $0.50$) but degrades at unmeasured ones ($0.58$), indicating limited spatial generalization. The improvement over Linear-DSTM---which shares the same spatial basis and Kalman updates---isolates the contribution of nonlinear continuous-time dynamics via the Neural ODE residual.

\begin{table}[htbp]
\centering
\caption{(\textbf{RQ1}, \textbf{RQ3}) Forecasting under missing-at-random observations on \textbf{D1} (advection--diffusion).
Mean (std) over 10 seeds. Best in \textbf{bold}; second-best \underline{underlined}.}
\label{tab:rq1_missingness_main}
%\setlength{\tabcolsep}{4pt}
%\small
\fontsize{8}{9}\selectfont
\begin{tabular}{lccc}
\toprule
Method & $\rho_m=10\%$ & $\rho_m=20\%$ & $\rho_m=30\%$ \\
\midrule
\multicolumn{4}{l}{\textit{RMSE}} \\[2pt]
\textsc{NLBST} (Ours)
  & \textbf{0.589\,(0.108)} & \textbf{0.620\,(0.087)} & \textbf{0.613\,(0.103)} \\
Linear-DSTM
  & 2.543\,(2.053) & 0.795\,(0.382) & 0.740\,(0.358) \\
GRU-D
  & 1.076\,(0.137) & 0.880\,(0.112) & 0.754\,(0.090) \\
Latent ODE
  & 4.280\,(0.245) & 4.331\,(0.268) & 4.229\,(0.305) \\
FNO
  & 1.651\,(0.803) & 2.746\,(0.581) & 3.089\,(0.551) \\
GraFITi
  & 0.702\,(0.304) & 0.713\,(0.320) & 0.703\,(0.329) \\
APN
  & \underline{0.648\,(0.062)} & \underline{0.655\,(0.056)} & \underline{0.654\,(0.062)} \\
\midrule
\multicolumn{4}{l}{\textit{CRPS}} \\[2pt]
\textsc{NLBST} (Ours)
  & \textbf{0.369\,(0.055)} & \textbf{0.383\,(0.046)} & \textbf{0.384\,(0.051)} \\
Linear-DSTM
  & 1.706\,(1.549) & 0.470\,(0.204) & 0.434\,(0.201) \\
GRU-D
  & 0.743\,(0.118) & 0.569\,(0.089) & 0.473\,(0.069) \\
Latent ODE
  & 3.325\,(0.251) & 3.368\,(0.261) & 3.271\,(0.299) \\
FNO
  & 1.048\,(0.601) & 1.957\,(0.576) & 2.343\,(0.586) \\
GraFITi
  & 0.439\,(0.178) & 0.447\,(0.191) & 0.442\,(0.206) \\
APN
  & \underline{0.424\,(0.147)} & \underline{0.415\,(0.136)} & \underline{0.419\, (0.150)} \\
\bottomrule
\end{tabular}
\end{table}

\begin{table}[htbp]
\centering
\caption{(\textbf{RQ1}, \textbf{RQ2}, \textbf{RQ3}) PM\textsubscript{2.5} daily forecasting (\textbf{D3}): mean (std) over 10 seeds with $20\%$ spatial holdout and $\rho_m{=}10\%$.
Best in \textbf{bold}; second-best \underline{underlined}.}
%\vspace{-0.3em}
\label{tab:pm25_rq1}
\fontsize{8}{9}\selectfont
%\small
\setlength{\tabcolsep}{4pt}
%\resizebox{\columnwidth}{!}{%
\begin{tabular}{l cc cc}
\toprule
& \multicolumn{2}{c}{\textbf{Measured}} & \multicolumn{2}{c}{\textbf{Unmeasured}} \\
\cmidrule(lr){2-3} \cmidrule(lr){4-5}
Model & RMSE & CRPS & RMSE & CRPS \\
\midrule
\textsc{NLBST} (Ours)
  & \textbf{0.48}\,(0.01)
  & \textbf{0.28}\,(0.01)
  & \textbf{0.49}\,(0.02)
  & \textbf{0.29}\,(0.01)
\\
Linear-DSTM
  & 0.52\,(0.03)
  & \underline{0.29}\,(0.01)
  & \underline{0.53}\,(0.02)
  & \underline{0.30}\,(0.01)
\\
GRU-D
  & 1.12\,(0.03)
  & 0.72\,(0.02)
  & 1.03\,(0.04)
  & 0.78\,(0.04)
\\
Latent ODE
  & 1.20\,(0.07)
  & 0.72\,(0.06)
  & 1.09\,(0.07)
  & 0.78\,(0.08)
\\
FNO
  & 1.12\,(0.08)
  & 0.73\,(0.06)
  & 0.90\,(0.09)
  & 0.65\,(0.08)
\\
GraFITi
  & \underline{0.50}\,(0.01)
  & 0.34\,(0.01)
  & 0.58\,(0.11)
  & 0.39\,(0.07)
\\
APN
  & 0.63\,(0.21)
  & 0.44\,(0.19)
  & 0.70\,(0.21)
  & 0.50\,(0.19)
\\
\bottomrule
\end{tabular}
%}
\end{table}

\textbf{RQ2: Spatially inductive prediction.}
On \textbf{D1}, \textsc{NLBST} achieves the lowest RMSE at measured locations across all holdout levels and also the lowest RMSE at held-out,  unmeasured locations (Table~\ref{tab:rq2_D1}), confirming inductive prediction via the spatial basis $\mu(x,t)=\bphi(x)^\top\bz(t)$ without retraining.
On \textbf{D3} (Table~\ref{tab:pm25_rq1}), the unmeasured-location RMSE ($0.49$) is comparable to the measured-location RMSE ($0.48$), indicating that the learned coefficient dynamics generalize to unseen coordinates through basis evaluation alone.
By contrast, the deep learning baselines rely on post-hoc spatial interpolation for held-out stations and incur substantially larger errors at unmeasured locations.

\textbf{RQ3: Uncertainty quantification.}
\textsc{NLBST} achieves the lowest CRPS across various datasets and settings (Tables~\ref{tab:rq1_missingness_main},  \ref{tab:pm25_rq1}, \ref{tab:rq4_baseline}), indicating strong probabilistic forecasting performance. On \textbf{D3}, predictive intervals exhibit sensible time-varying behavior, widening during volatile periods (Figure~\ref{fig:pm25_uq_examples}). 
To assess NLBST’s uncertainty quantification, we construct Gaussian prediction intervals at each nominal level using the posterior mean and standard deviation from Monte Carlo samples, and compute the fraction of test observations contained within each interval as the empirical coverage. Across nominal levels from $10\%$ to $99\%$, the empirical coverage closely aligns with the nominal level (Figure~\ref{fig:pm25_calibration}), demonstrating that the three-level Bayesian hierarchy yields well-calibrated uncertainty.

\begin{table}[htbp]
\caption{(\textbf{RQ2}) Spatially inductive prediction on \textbf{D1} (advection--diffusion).
RMSE at measured (m) and held-out unmeasured (u) locations.
Mean (std) over 10 seeds; \textbf{bold} = best, \underline{underlined} = second best per column.}
%\vspace{-0.3em}
\label{tab:rq2_D1}
\centering\fontsize{8}{9}\selectfont
\setlength{\tabcolsep}{4pt}
\begin{tabular}{l rr rr rr}
\toprule
& \multicolumn{2}{c}{5 held out} & \multicolumn{2}{c}{10 held out} & \multicolumn{2}{c}{15 held out} \\
\cmidrule(lr){2-3}\cmidrule(lr){4-5}\cmidrule(lr){6-7}
Model & m & u & m & u & m & u \\
\midrule
\textsc{NLBST} (Ours) & \textbf{0.53} & \textbf{0.71} & \textbf{0.52} & \textbf{0.78} & \textbf{0.55} & \textbf{0.76} \\
                       & \textit{(0.01)} & \textit{(0.28)} & \textit{(0.04)} & \textit{(0.14)} & \textit{(0.03)} & \textit{(0.12)} \\[2pt]
Linear-DSTM           & 1.04 & 2.39 & 1.27 & 3.12 & 0.84 & 2.02 \\
                       & \textit{(0.35)} & \textit{(0.81)} & \textit{(0.43)} & \textit{(0.62)} & \textit{(0.25)} & \textit{(0.47)} \\[2pt]
GRU-D                 & 1.93 & 2.01 & 1.80 & 1.99 & 1.96 & 2.10 \\
                       & \textit{(0.27)} & \textit{(0.58)} & \textit{(0.13)} & \textit{(0.16)} & \textit{(0.17)} & \textit{(0.27)} \\[2pt]
Latent ODE            & 4.21 & 4.36 & 4.39 & 4.56 & 4.32 & 4.48 \\
                       & \textit{(0.25)} & \textit{(0.22)} & \textit{(0.18)} & \textit{(0.17)} & \textit{(0.08)} & \textit{(0.16)} \\[2pt]
FNO                   & 2.27 & 2.37 & 2.50 & 2.59 & 2.37 & 2.45 \\
                       & \textit{(0.58)} & \textit{(0.23)} & \textit{(0.52)} & \textit{(0.60)} & \textit{(0.43)} & \textit{(0.38)} \\[2pt]
GraFITi               & \underline{0.71} & 1.02 & 0.70 & 0.94 & 0.80 & 1.01 \\
                       & \textit{(0.28)} & \textit{(0.54)} & \textit{(0.28)} & \textit{(0.44)} & \textit{(0.36)} & \textit{(0.46)} \\[2pt]
APN                   & \underline{0.71} & \underline{0.95} & \underline{0.63} & \underline{0.88} & \underline{0.70} & \underline{0.94} \\
                       & \textit{(0.20)} & \textit{(0.40)} & \textit{(0.11)} & \textit{(0.31)} & \textit{(0.11)} & \textit{(0.30)} \\
\bottomrule
\end{tabular}
\end{table}

\begin{table}[htbp]
\caption{(\textbf{RQ3}, \textbf{RQ4}) Forecasting on the nonlocal IDE benchmark (\textbf{D2}). Mean (std) over 10 seeds.}
%\vspace{-0.3em}
\label{tab:rq4_baseline}
\centering%\small
\fontsize{8}{9}\selectfont
%\setlength{\tabcolsep}{3pt}
%\begin{tabular}{lrr}
\begin{tabular*}{0.9\columnwidth}{l @{\extracolsep{\fill}} rr}
\toprule
Model & RMSE & CRPS \\
\midrule
\textsc{NLBST} (Ours) & \textbf{9.54}\,(3.36) & \textbf{6.84}\,(2.26) \\
Linear-DSTM & \underline{14.42}\,(0.95) & \underline{10.10}\,(1.22) \\
GRU-D & 20.15\,(4.72) & 16.79\,(5.05) \\
Latent ODE & 28.31\,(9.27) & 24.04\,(9.55) \\
FNO & 29.60\,(10.69) & 25.25\,(10.62) \\
GraFITi & 15.29\,(4.40) & 10.48\,(3.64) \\
APN & 20.88\,(4.78) & 17.28\,(5.14) \\
\bottomrule
\end{tabular*}
\end{table}
\vspace{1em}

\textbf{RQ4: Nonlocal dynamics.}
On \textbf{D2}, \textsc{NLBST} substantially outperforms all baselines, reducing the RMSE of the best competitor by over $33\%$ (Table~\ref{tab:rq4_baseline}: $9.54$ vs.\ Linear-DSTM $14.42$; CRPS $6.84$ vs.\ $10.10$).
These gains highlight that explicitly learning long-range coupling via $A_{\mathrm{NL}}$ is particularly beneficial when the underlying dynamics are dominated by nonlocal interactions.
Notably, FNO---a neural-operator baseline designed to model nonlocal operator structure--- still performs poorly on \textbf{D2} (RMSE $29.60$), suggesting that capturing nonlocal structure alone is insufficient in this sparse-sensor sequential forecasting setting without principled temporal dynamics.
The basis-dimension sweep (Table~\ref{tab:rq4_ksweep}) shows monotonic improvement as $K$ increases ($11.70 \to 9.54$), consistent with the Galerkin convergence theory (Theorem~\ref{thm:galerkin_main}).

\begin{figure}[htbp]
\centering
\includegraphics[width=0.99\columnwidth]{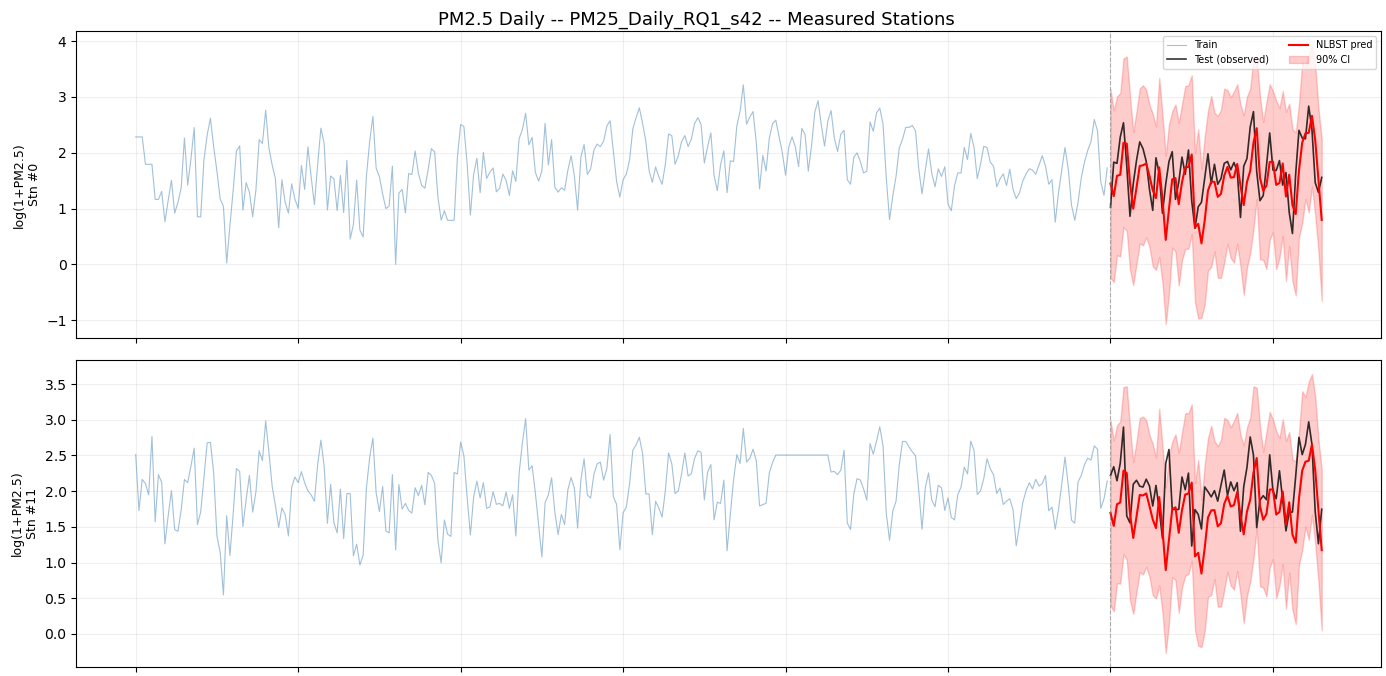}
%\vspace{0.4em}
\includegraphics[width=0.99\columnwidth]{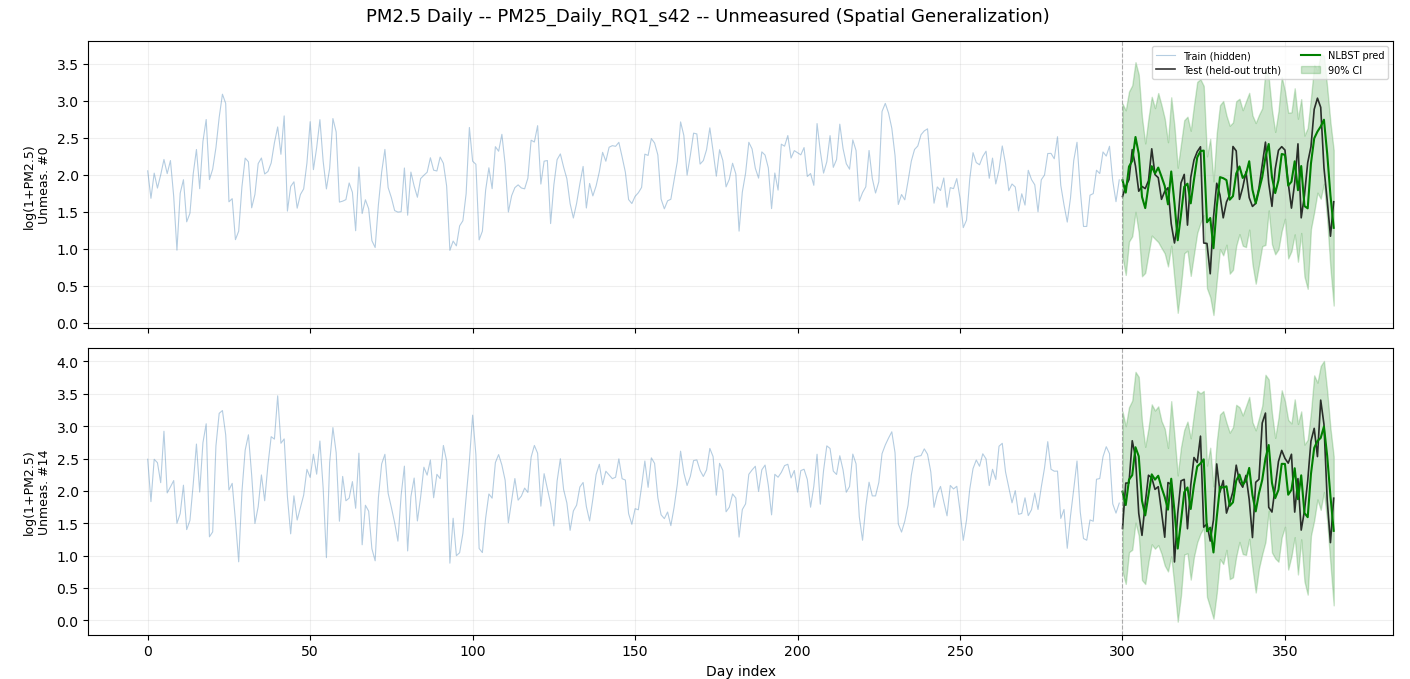}
\caption{(\textbf{RQ3}) EPA PM\textsubscript{2.5} predictive trajectories with $90\%$ intervals.
\textbf{Top:} the upper two subplots (red-shaded), corresponding to measured stations.
\textbf{Bottom:} the lower two subplots (green-shaded), corresponding to unmeasured stations.
Shaded bands denote $90\%$ predictive intervals.}
%\vspace{-0.3em}
\label{fig:pm25_uq_examples}
\end{figure}

\begin{figure}[htbp]
\centering
\includegraphics[width=0.6\columnwidth]{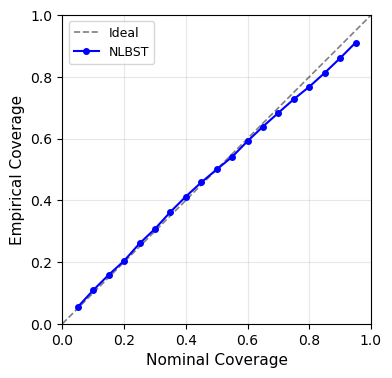}
\caption{(\textbf{RQ3}) Calibration plot for EPA PM\textsubscript{2.5}: nominal vs.\ empirical coverage. The dashed line indicates ideal calibration, and the solid curve (\textsc{NLBST}) tracks it closely.}
\label{fig:pm25_calibration}
\end{figure}

\begin{table}[htbp]
\caption{(\textbf{RQ4}) Basis dimension sweep on \textbf{D2}. RMSE decreases monotonically with $K$, consistent with Galerkin convergence theory (Theorem~\ref{thm:galerkin_main}).}
\label{tab:rq4_ksweep}
\centering%\small
\fontsize{8}{9}\selectfont
\begin{tabular}{lcccc}
\toprule
 & $K{=}8$ & $K{=}12$ & $K{=}16$ & $K{=}24$ \\
\midrule
RMSE & 11.70\,(3.69) & 10.70\,(3.66) & 10.08\,(3.38) & \textbf{9.54}\,(3.36) \\
\bottomrule
\end{tabular}
\end{table}

\textbf{Computational cost and effect of $K$.}
Table~\ref{tab:cost_ksens_main} reports accuracy and per-epoch cost as a function of the
basis dimension $K$ on \textbf{D3}. Accuracy improves up to $K{\approx}24$ and then
saturates (measured RMSE $0.527 \to 0.494$ from $K{=}8$ to $24$, with $<\!1\%$ further
gain up to $K{=}64$), supporting $K{=}24$ as a sensible default. Although the Kalman
update has $\cO(K^3)$ worst-case complexity, per-epoch runtime stays near-constant
($\sim$6.6\,s) and memory grows only mildly ($0.03 \to 0.06$\,GB) as $K$ grows by a factor of eight, indicating that fixed overhead dominates the $\cO(K^3)$ cost within the tested
range. \textsc{NLBST} thus attains strong accuracy at low computational cost. 

\begin{table}[t]
\centering
\caption{(\textbf{$K$-sensitivity \& Computational Cost}) Accuracy and per-epoch cost vs.\ basis dimension
$K$ on \textbf{D3}. Accuracy saturates around $K{=}24$ while
runtime and memory stay nearly constant. RMSE: mean (std) over seeds; cost: mean (std)
over epochs.}
\label{tab:cost_ksens_main}
\fontsize{8}{9}\selectfont
\setlength{\tabcolsep}{4pt}
\begin{tabular}{lcccc}
\toprule
$K$ & RMSE (m) & RMSE (u) & Time/ep.\,(s) & Mem (GB) \\
\midrule
8  & 0.527\,(0.005) & 0.521\,(0.006) & 6.71\,(0.13) & 0.03 \\
16 & 0.499\,(0.004) & 0.485\,(0.013) & 6.55\,(0.03) & 0.03 \\
24 & 0.494\,(0.006) & 0.482\,(0.009) & 6.63\,(0.09) & 0.04 \\
32 & 0.494\,(0.009) & 0.479\,(0.013) & 6.57\,(0.06) & 0.04 \\
64 & 0.490\,(0.010) & 0.480\,(0.018) & 6.58\,(0.05) & 0.06 \\
\bottomrule
\end{tabular}
\end{table}

\section{Conclusion}
\label{sec:conclusion}
We introduced \textsc{NLBST}, a continuous-space, continuous-time hierarchical Bayesian model that learns explicit nonlocal coupling while retaining tractable inference and Kalman-style measurement updates. 
\textsc{NLBST} combines a coordinate-based spatial basis with an ODE-driven latent process and variational learning of global uncertainty parameters, enabling forecasting, inductive prediction at unseen coordinates, and calibrated uncertainty in a unified framework.
By jointly learning nonlocal interactions and continuous-time dynamics within a Bayesian hierarchy, \textsc{NLBST} bridges the gap between expressive neural dynamics and principled uncertainty quantification, which prior methods typically treat in isolation.
Experiments on synthetic dynamics and real-world EPA PM\textsubscript{2.5} data show consistently strong performance, especially in strongly nonlocal and partially observed regimes. The future work includes structured regularization of $A_{\mathrm{NL}}$ and learnable basis-dimension selection.

\begin{acknowledgements}
This work was supported by the National Research Foundation of Korea (NRF) grant funded by the Korea government (MSIT) (2023R1A2C2005453, RS-2023-00218913).
\end{acknowledgements}

% References
\bibliography{bibliography}

@article{lee2020bayesian,
  title={Bayesian nonparametric joint mixture model for clustering spatially correlated time series},
  author={Lee, Youngmin and Kim, Heeyoung},
  journal={Technometrics},
  volume={62},
  number={3},
  pages={313--329},
  year={2020},
  publisher={Taylor \& Francis}
}

@article{jung2024spatially,
  title={Spatially-correlated time series clustering using location-dependent Dirichlet process mixture model},
  author={Jung, Junsub and Kim, Sungil and Kim, Heeyoung},
  journal={Statistical Analysis and Data Mining: The ASA Data Science Journal},
  volume={17},
  number={1},
  pages={e11649},
  year={2024},
  publisher={Wiley Online Library}
}

@article{kim2019spatiotemporal,
  title={Spatiotemporal auto-regressive model for origin--destination air passenger flows},
  author={Kim, Keunseo and Kim, Vinnam and Kim, Heeyoung},
  journal={Journal of the Royal Statistical Society Series A: Statistics in Society},
  volume={182},
  number={3},
  pages={1003--1016},
  year={2019},
  publisher={Oxford University Press}
}

@book{wikle2019spatio,
  title={Spatio-temporal statistics with R},
  author={Wikle, Christopher K and Zammit-Mangion, Andrew and Cressie, Noel},
  year={2019},
  publisher={Chapman and Hall/CRC}
}

@book{cressie2015statistics,
  title={Statistics for spatial data},
  author={Cressie, Noel},
  year={2015},
  publisher={John Wiley \& Sons}
}

@article{rubanova2019latent,
  title={Latent ordinary differential equations for irregularly-sampled time series},
  author={Rubanova, Yulia and Chen, Ricky TQ and Duvenaud, David K},
  journal={Advances in neural information processing systems},
  volume={32},
  year={2019}
}

@article{che2018recurrent,
  title={Recurrent neural networks for multivariate time series with missing values},
  author={Che, Zhengping and Purushotham, Sanjay and Cho, Kyunghyun and Sontag, David and Liu, Yan},
  journal={Scientific reports},
  volume={8},
  number={1},
  pages={6085},
  year={2018},
  publisher={Nature Publishing Group UK London}
}

@article{chen2018neural,
  title={Neural ordinary differential equations},
  author={Chen, Ricky TQ and Rubanova, Yulia and Bettencourt, Jesse and Duvenaud, David K},
  journal={Advances in neural information processing systems},
  volume={31},
  year={2018}
}

@article{kidger2020neural,
  title={Neural controlled differential equations for irregular time series},
  author={Kidger, Patrick and Morrill, James and Foster, James and Lyons, Terry},
  journal={Advances in neural information processing systems},
  volume={33},
  pages={6696--6707},
  year={2020}
}

@inproceedings{gal2016dropout,
  title={Dropout as a bayesian approximation: Representing model uncertainty in deep learning},
  author={Gal, Yarin and Ghahramani, Zoubin},
  booktitle={international conference on machine learning},
  pages={1050--1059},
  year={2016},
  organization={PMLR}
}

@article{clarotto2024spde,
  title={The SPDE approach for spatio-temporal datasets with advection and diffusion},
  author={Clarotto, Lucia and Allard, Denis and Romary, Thomas and Desassis, Nicolas},
  journal={Spatial Statistics},
  volume={62},
  pages={100847},
  year={2024},
  publisher={Elsevier}
}

@article{kumar2024spatio,
  title={Spatio-temporal predictive modeling techniques for different domains: a survey},
  author={Kumar, Rahul and Bhanu, Manish and Mendes-Moreira, Jo{\~a}o and Chandra, Joydeep},
  journal={ACM Computing Surveys},
  volume={57},
  number={2},
  pages={1--42},
  year={2024},
  publisher={ACM New York, NY}
}

@article{faghmous2014spatio,
  title={Spatio-temporal data mining for climate data: Advances, challenges, and opportunities},
  author={Faghmous, James H and Kumar, Vipin},
  journal={Data mining and knowledge discovery for big data: Methodologies, challenge and opportunities},
  pages={83--116},
  year={2014},
  publisher={Springer}
}

@article{jiang2018survey,
  title={A survey on spatial prediction methods},
  author={Jiang, Zhe},
  journal={IEEE transactions on knowledge and Data Engineering},
  volume={31},
  number={9},
  pages={1645--1664},
  year={2018},
  publisher={IEEE}
}

@incollection{pawar2024handling,
  title={Handling Uncertainty in Spatiotemporal Data},
  author={Pawar, Shivaji D and Pawar, Varsha S and Abimannan, Satheesh},
  booktitle={Spatiotemporal Data Analytics and Modeling: Techniques and Applications},
  pages={69--87},
  year={2024},
  publisher={Springer}
}

@article{stroud2001dynamic,
  title={Dynamic models for spatiotemporal data},
  author={Stroud, Jonathan R and M{\"u}ller, Peter and Sans{\'o}, Bruno},
  journal={Journal of the Royal Statistical Society: Series B (Statistical Methodology)},
  volume={63},
  number={4},
  pages={673--689},
  year={2001},
  publisher={Wiley Online Library}
}

@article{wikle2010general,
  title={A general science-based framework for dynamical spatio-temporal models},
  author={Wikle, Christopher K and Hooten, Mevin B},
  journal={Test},
  volume={19},
  number={3},
  pages={417--451},
  year={2010},
  publisher={Springer}
}

@article{zammit2022deep,
  title={Deep compositional spatial models},
  author={Zammit-Mangion, Andrew and Ng, Tin Lok James and Vu, Quan and Filippone, Maurizio},
  journal={Journal of the American Statistical Association},
  volume={117},
  number={540},
  pages={1787--1808},
  year={2022},
  publisher={Taylor \& Francis}
}

@inproceedings{zhang2024irregular,
  title={Irregular traffic time series forecasting based on asynchronous spatio-temporal graph convolutional networks},
  author={Zhang, Weijia and Zhang, Le and Han, Jindong and Liu, Hao and Fu, Yanjie and Zhou, Jingbo and Mei, Yu and Xiong, Hui},
  booktitle={Proceedings of the 30th ACM SIGKDD Conference on Knowledge Discovery and Data Mining},
  pages={4302--4313},
  year={2024}
}

@article{yu2017spatio,
  title={Spatio-temporal graph convolutional networks: A deep learning framework for traffic forecasting},
  author={Yu, Bing and Yin, Haoteng and Zhu, Zhanxing},
  journal={arXiv preprint arXiv:1709.04875},
  year={2017}
}

@article{jin2023spatio,
  title={Spatio-temporal graph neural networks for predictive learning in urban computing: A survey},
  author={Jin, Guangyin and Liang, Yuxuan and Fang, Yuchen and Shao, Zezhi and Huang, Jincai and Zhang, Junbo and Zheng, Yu},
  journal={IEEE transactions on knowledge and data engineering},
  volume={36},
  number={10},
  pages={5388--5408},
  year={2023},
  publisher={IEEE}
}

@inproceedings{yalavarthi2024grafiti,
  title={Grafiti: Graphs for forecasting irregularly sampled time series},
  author={Yalavarthi, Vijaya Krishna and Madhusudhanan, Kiran and Scholz, Randolf and Ahmed, Nourhan and Burchert, Johannes and Jawed, Shayan and Born, Stefan and Schmidt-Thieme, Lars},
  booktitle={Proceedings of the AAAI Conference on Artificial Intelligence},
  volume={38},
  number={15},
  pages={16255--16263},
  year={2024}
}

@article{koo2024deep,
  title={Deep Latent Factor Model for Spatio-Temporal Forecasting},
  author={Koo, Wonmo and Ma, Eun-Yeol and Kim, Heeyoung},
  journal={Technometrics},
  volume={66},
  number={3},
  pages={470--482},
  year={2024},
  publisher={Taylor \& Francis}
}

@article{wu2024continuously,
  title={Continuously evolving graph neural controlled differential equations for traffic forecasting},
  author={Wu, Jiajia and Chen, Ling},
  journal={arXiv preprint arXiv:2401.14695},
  year={2024}
}

@inproceedings{choi2022graph,
  title={Graph neural controlled differential equations for traffic forecasting},
  author={Choi, Jeongwhan and Choi, Hwangyong and Hwang, Jeehyun and Park, Noseong},
  booktitle={Proceedings of the AAAI conference on artificial intelligence},
  volume={36},
  number={6},
  pages={6367--6374},
  year={2022}
}

@article{tibau2022spatiotemporal,
  title={A spatiotemporal stochastic climate model for benchmarking causal discovery methods for teleconnections},
  author={Tibau, Xavier-Andoni and Reimers, Christian and Gerhardus, Andreas and Denzler, Joachim and Eyring, Veronika and Runge, Jakob},
  journal={Environmental Data Science},
  volume={1},
  pages={e12},
  year={2022}
}

@article{wallace1981teleconnections,
  title={Teleconnections in the geopotential height field during the Northern Hemisphere winter},
  author={Wallace, John M and Gutzler, David S},
  journal={Monthly weather review},
  volume={109},
  number={4},
  pages={784--812},
  year={1981}
}

@article{yeh2018enso,
  title={ENSO atmospheric teleconnections and their response to greenhouse gas forcing},
  author={Yeh, Sang-Wook and Cai, Wenju and Min, Seung-Ki and McPhaden, Michael J and Dommenget, Dietmar and Dewitte, Boris and Collins, Matthew and Ashok, Karumuri and An, Soon-Il and Yim, Bo-Young and others},
  journal={Reviews of Geophysics},
  volume={56},
  number={1},
  pages={185--206},
  year={2018},
  publisher={Wiley Online Library}
}

@article{alizadeh2024review,
  title={A review of ENSO teleconnections at present and under future global warming},
  author={Alizadeh, Omid},
  journal={Wiley Interdisciplinary Reviews: Climate Change},
  volume={15},
  number={1},
  pages={e861},
  year={2024},
  publisher={Wiley Online Library}
}

@article{yang2018nino,
  title={El Ni{\~n}o--Southern Oscillation and its impact in the changing climate},
  author={Yang, Song and Li, Zhenning and Yu, Jin-Yi and Hu, Xiaoming and Dong, Wenjie and He, Shan},
  journal={National Science Review},
  volume={5},
  number={6},
  pages={840--857},
  year={2018},
  publisher={Oxford University Press}
}

@article{wikle1998hierarchical,
  title={Hierarchical Bayesian space-time models},
  author={Wikle, Christopher K and Berliner, L Mark and Cressie, Noel},
  journal={Environmental and ecological statistics},
  volume={5},
  number={2},
  pages={117--154},
  year={1998},
  publisher={Springer}
}

@article{stuart2010inverse,
  title={Inverse problems: a Bayesian perspective},
  author={Stuart, Andrew M},
  journal={Acta numerica},
  volume={19},
  pages={451--559},
  year={2010},
  publisher={Cambridge University Press}
}

@incollection{dashti2015bayesian,
  title={The Bayesian approach to inverse problems},
  author={Dashti, Masoumeh and Stuart, Andrew M},
  booktitle={Handbook of uncertainty quantification},
  pages={1--118},
  year={2015},
  publisher={Springer}
}

@article{laing2003pde,
  title={PDE methods for nonlocal models},
  author={Laing, Carlo R and Troy, William C},
  journal={SIAM Journal on Applied Dynamical Systems},
  volume={2},
  number={3},
  pages={487--516},
  year={2003},
  publisher={SIAM}
}

@article{li2020fourier,
  title={Fourier neural operator for parametric partial differential equations},
  author={Li, Zongyi and Kovachki, Nikola and Azizzadenesheli, Kamyar and Liu, Burigede and Bhattacharya, Kaushik and Stuart, Andrew and Anandkumar, Anima},
  journal={arXiv preprint arXiv:2010.08895},
  year={2020}
}

@article{tashman2000out,
  title={Out-of-sample tests of forecasting accuracy: an analysis and review},
  author={Tashman, Leonard J},
  journal={International journal of forecasting},
  volume={16},
  number={4},
  pages={437--450},
  year={2000},
  publisher={Elsevier}
}

@article{sigrist2015stochastic,
  title={Stochastic partial differential equation based modelling of large space--time data sets},
  author={Sigrist, Fabio and K{\"u}nsch, Hans R and Stahel, Werner A},
  journal={Journal of the Royal Statistical Society Series B: Statistical Methodology},
  volume={77},
  number={1},
  pages={3--33},
  year={2015},
  publisher={Oxford University Press}
}

@article{lindgren2011explicit,
  title={An explicit link between Gaussian fields and Gaussian Markov random fields: the stochastic partial differential equation approach},
  author={Lindgren, Finn and Rue, H{\aa}vard and Lindstr{\"o}m, Johan},
  journal={Journal of the Royal Statistical Society Series B: Statistical Methodology},
  volume={73},
  number={4},
  pages={423--498},
  year={2011},
  publisher={Oxford University Press}
}

@inproceedings{liu2026apn,
 title     = {Rethinking Irregular Time Series Forecasting: A Simple yet Effective Baseline},
 author    = {Xvyuan Liu and Xiangfei Qiu and Xingjian Wu and Zhengyu Li and Chenjuan Guo and Jilin Hu and Bin Yang},
 booktitle = {AAAI},
 year      = {2026}
}

@article{kovachki2023neural,
  title={Neural operator: Learning maps between function spaces with applications to pdes},
  author={Kovachki, Nikola and Li, Zongyi and Liu, Burigede and Azizzadenesheli, Kamyar and Bhattacharya, Kaushik and Stuart, Andrew and Anandkumar, Anima},
  journal={Journal of Machine Learning Research},
  volume={24},
  number={89},
  pages={1--97},
  year={2023}
}

@inproceedings{li2020scalable,
  title={Scalable gradients for stochastic differential equations},
  author={Li, Xuechen and Wong, Ting-Kam Leonard and Chen, Ricky TQ and Duvenaud, David},
  booktitle={International conference on artificial intelligence and statistics},
  pages={3870--3882},
  year={2020},
  organization={PMLR}
}

@article{de2019gru,
  title={GRU-ODE-Bayes: Continuous modeling of sporadically-observed time series},
  author={De Brouwer, Edward and Simm, Jaak and Arany, Adam and Moreau, Yves},
  journal={Advances in neural information processing systems},
  volume={32},
  year={2019}
}

@article{scholz2023latent,
  title={Latent Linear ODEs with Neural Kalman Filtering for Irregular Time Series Forecasting},
  author={Scholz, Randolf and Born, Stefan and Duong-Trung, Nghia and Cruz-Bournazou, Mariano Nicolas and Schmidt-Thieme, Lars},
  year={2023}
}

@inproceedings{liu2022msdr,
  title={Msdr: Multi-step dependency relation networks for spatial temporal forecasting},
  author={Liu, Dachuan and Wang, Jin and Shang, Shuo and Han, Peng},
  booktitle={Proceedings of the 28th ACM SIGKDD conference on knowledge discovery and data mining},
  pages={1042--1050},
  year={2022}
}

@inproceedings{guo2019attention,
  title={Attention based spatial-temporal graph convolutional networks for traffic flow forecasting},
  author={Guo, Shengnan and Lin, Youfang and Feng, Ning and Song, Chao and Wan, Huaiyu},
  booktitle={Proceedings of the AAAI conference on artificial intelligence},
  volume={33},
  number={01},
  pages={922--929},
  year={2019}
}

\newpage

\onecolumn

\makeatletter
\renewcommand{\thanks}[1]{\textsuperscript{*}} 
\global\let\@thanks\@empty    
\makeatother

\title{Nonlocal Bayesian Modeling of Continuous Spatio-Temporal Dynamics\\(Supplementary Material)}
\maketitle

%This Supplementary Material should be submitted together with the main paper.

\appendix

\section{Theoretical Analysis}
\label{app:theory}
This section provides the detailed analysis and proofs for the theoretical guarantees stated in Sec.~\ref{sec:theory}, (Theorems~\ref{thm:wellposed_main}--\ref{thm:galerkin_main}).

\subsection{Problem Setup}
\label{app:theory:setup}

\paragraph{Nonlocal stochastic evolution equations.}
Let $\cH=L^2(D)$ with norm $\|\cdot\|$ and inner product $\langle\cdot,\cdot\rangle$.
A nonlocal kernel $G(x,x')$ induces the integral operator $\cG:\cH\to\cH$,
\[
(\cG u)(x) \;=\; \int_D G(x,x')\,u(x')\,dx'.
\]
We consider the $\cH$-valued nonlocal stochastic evolution of the latent field $\mu(t)\in\cH$:
\begin{equation}
\label{eq:spde_app}
d\mu(t)
= \big(\cG\mu(t) + \cF_\theta(t,\mu(t))\big)\,dt
+ \Sigma(t,\mu(t))\,dW_t,
\end{equation}
where $W$ is a $Q$-Wiener process on $\cH$.
The drift and diffusion are defined by applying time-dependent scalar-valued functions
$f_\theta(t,\cdot):\R\to\R$ and $\sigma(t,\cdot):\R\to\R$ pointwise to the field:
\[
(\cF_\theta(t,u))(x)=f_\theta(t,u(x)),
\qquad
(\Sigma(t,u)v)(x)=\sigma(t,u(x))\,v(x),
\quad u,v\in\cH.
\]

\paragraph{Galerkin projection.}
Let $\{\phi_i\}_{i\ge 1}\subset\cH$ and $V_K=\mathrm{span}\{\phi_1,\dots,\phi_K\}$.
For simplicity, let $P_K:\cH\to V_K$ denote the orthogonal projection (non-orthonormal bases can be handled via the Gram/mass matrix).
The Galerkin projected dynamics on $V_K$ are defined by
\begin{equation}
\label{eq:galerkin_projected}
d\mu_K(t) = P_K\!\big(\cG\mu_K(t) + \cF_\theta(t,\mu_K(t))\big)\,dt + P_K\Sigma(t,\mu_K(t))\,dW_t,
\quad \mu_K(0)=P_K\mu_0.
\end{equation}

\subsection{Preliminaries}
\label{app:theory:prelim}

\paragraph{Semigroups of linear operators.} Let $H$ be a Hilbert space. 
\begin{definition}[Semigroup] A family $\{S(t)\}_{t \ge 0}$ of bounded linear operators $S(t):H \to H$ is a \emph{semigroup} if 
\begin{compactenum}
    \item $S(0)=I$, 
    \item $S(t+s)=S(t)S(s)$ for all $s,t \ge 0$. 
\end{compactenum} 
\end{definition} 
\begin{definition}[Infinitesimal generator] The \emph{infinitesimal generator} of a semigroup $\{S(t)\}_{t \ge 0}$ is the operator $A$ defined by \[ Au \;:=\; \lim_{h \downarrow 0}\frac{S(h)u - u}{h}, \] on the domain $D(A)=\{u\in H: \text{the limit exists in }H\}$. \end{definition} If $A$ is a bounded operator on $H$, then $S(t)=e^{tA}$ is a \emph{uniformly continuous} semigroup satisfying 
\begin{equation} 
\label{eq:semigroup_bound_app} 
\|S(t)\|_{\mathcal{L}(H)} \;\le\; e^{t\|A\|}. 
\end{equation}

\paragraph{Mild solutions (variation of constants).}
Consider a semilinear stochastic evolution on a Hilbert space $H$ of the form
\begin{equation}
\label{eq:semilinear_stoch}
du(t) = \big(Au(t) + F(t,u(t))\big)\,dt + \Sigma(t,u(t))\,dW_t,
\qquad u(0)=u_0\in H,
\end{equation}
where $A$ generates a semigroup $S(t)$ on $H$ and $W$ is a $Q$-Wiener process.
A process $u$ is called a \emph{mild solution} if it satisfies the variation-of-constants formula
\begin{equation}
\label{eq:mild_form_app}
u(t) = S(t)\,u_0
+ \int_0^t S(t-s)\,F(s,u(s))\,ds
+ \int_0^t S(t-s)\,\Sigma(s,u(s))\,dW_s.
\end{equation}
In our setting Eq.\eqref{eq:spde_app}, we take $H=\cH$, $A=\cG$, and $F=\cF_\theta$.

\paragraph{Banach fixed-point theorem.}
Let $(X,d)$ be a complete metric space and $J:X\to X$ satisfy $d(Ju,Jv)\le \gamma\,d(u,v)$ for some $\gamma\in[0,1)$.
Then $J$ has a unique fixed point in $X$.

\paragraph{Gr\"onwall's inequality.}
If $z(t)\le \alpha + \beta\int_0^t z(s)\,ds$ with $\alpha,\beta\ge 0$, then $z(t)\le \alpha e^{\beta t}$ for all $t\in[0,T]$.

\paragraph{Burkholder-Davis-Gundy inequality (BDG inequality).}
For a predictable process $B(t)$ with $B(t)Q^{1/2}$ Hilbert--Schmidt,
\begin{equation}
%\[
\E\Big[\sup_{t\in[0,T]}\Big\|\int_0^t B(s)\,dW_s\Big\|^2\Big]
\le C_{\mathrm{BDG}}\,
\E\int_0^T \|B(s)Q^{1/2}\|_{\mathrm{HS}}^2\,ds.
%\]
\end{equation}

\subsection{Assumptions}
\label{app:theory:assum}

\begin{assumption}[Bounded nonlocal operator]
\label{ass:bounded}
The kernel $G$ is square-integrable on $D\times D$:
\[
\int_D\int_D |G(x,x')|^2\,dx\,dx' < \infty,
\]
so $\cG$ is Hilbert--Schmidt (hence bounded) on $\cH$.
\end{assumption}

\begin{assumption}[Lipschitz and linear growth]
\label{ass:lipschitz}
There exist constants $L,C>0$ such that for all $t\in[0,T]$ and $u,v\in\R$,
\begin{align*}
|f_\theta(t,u)-f_\theta(t,v)| &\le L|u-v|,\\
|\sigma(t,u)-\sigma(t,v)| &\le L|u-v|,\\
|f_\theta(t,u)|^2 + |\sigma(t,u)|^2 &\le C(1+|u|^2).
\end{align*}
\end{assumption}

\begin{assumption}[$Q$-Wiener noise regularity]
\label{ass:q_wiener}
Let $W$ be a $Q$-Wiener process on $\cH$ with $\mathrm{Tr}(Q)<\infty$.
Assume $\Sigma(t,u)Q^{1/2}$ is Hilbert--Schmidt for all $t\in[0,T]$ and $u\in\cH$, and there exist
$L_\Sigma,C_\Sigma>0$ such that for all $u,v\in\cH$,
\[
\|\Sigma(t,u)Q^{1/2}-\Sigma(t,v)Q^{1/2}\|_{\mathrm{HS}} \le L_\Sigma \|u-v\|,
\qquad
\|\Sigma(t,u)Q^{1/2}\|_{\mathrm{HS}}^2 \le C_\Sigma(1+\|u\|^2).
\]
\end{assumption}

\begin{assumption}[Initial condition]
\label{ass:initial}
The initial condition $\mu_0\in L^2(\Omega;\cH)$ is independent of $W$ and satisfies
$\E[\|\mu_0\|^2]<\infty$.
\end{assumption}

\begin{assumption}[Approximation property] 
\label{ass:approx} 
$\overline{\bigcup_{K\ge 1} V_K}=\cH$. 
\end{assumption}

\subsection{Proof of Theorem~\ref{thm:wellposed_main} (Well-posedness)}

\begin{proof}
Let $S(t)=e^{t\cG}$. Since $\cG$ is bounded (Assumption~\ref{ass:bounded}),
Eq.\eqref{eq:semigroup_bound_app} with $A=\cG$ implies
\[
\|S(t)\|_{\mathcal{L}(\cH)} \le e^{t\|\cG\|} \le M_T:=e^{T\|\cG\|}, \qquad t\in[0,T].
\]
We first note that the pointwise drift map is Lipschitz on $\cH=L^2(D)$.
By Assumption~\ref{ass:lipschitz}, for all $t\in[0,T]$ and $u,v\in L^2(D)$,
\begin{align*}
\|\cF_\theta(t,u)-\cF_\theta(t,v)\|_{L^2(D)}^2
&= \int_D \big|f_\theta(t,u(x)) - f_\theta(t,v(x))\big|^2\,dx \\
&\le \int_D \big(L|u(x)-v(x)|\big)^2\,dx \\
&= L^2 \|u-v\|_{L^2(D)}^2,
\end{align*}
and hence $\|\cF_\theta(t,u)-\cF_\theta(t,v)\|_{L^2(D)} \le L\|u-v\|_{L^2(D)}$.
Similarly, Assumption~\ref{ass:q_wiener} gives for $u,v\in\cH$,
\[
\|\big(\Sigma(t,u)-\Sigma(t,v)\big)Q^{1/2}\|_{\mathrm{HS}}
\le L_\Sigma\|u-v\|.
\]

\paragraph{Existence and uniqueness on a short interval.}
Fix a horizon $\tau\in(0,T]$ and consider the Banach space
$\mathcal{X}_\tau := L^2(\Omega;C([0,\tau];\cH))$ with norm
$\|u\|_{\mathcal{X}_\tau}^2 := \E\big[\sup_{t\in[0,\tau]}\|u(t)\|^2\big]$.
Define the mild-map $\mathcal{T}$ by
\[
(\mathcal{T}u)(t):=S(t)\mu_0+\int_0^t S(t-s)\cF_\theta(s,u(s))\,ds
+\int_0^t S(t-s)\Sigma(s,u(s))\,dW_s .
\]
A fixed point of $\mathcal{T}$ is a mild solution (cf.\ Eq.\eqref{eq:mild_form_app} with $A=\cG$ and $F=\cF_\theta$).

We use the BDG inequality in $\cH$:
for predictable $B(t)$ with $B(t)Q^{1/2}$ Hilbert--Schmidt,
\[
\E\Big[\sup_{t\in[0,\tau]}\Big\|\int_0^t B(s)\,dW_s\Big\|^2\Big]
\le C_{\mathrm{BDG}}\E\int_0^\tau \|B(s)Q^{1/2}\|_{\mathrm{HS}}^2\,ds.
\]
For $u,v\in\mathcal{X}_\tau$, let $\Delta F(s):=\cF_\theta(s,u(s))-\cF_\theta(s,v(s))$.
Then
\begin{align*}
&\E\Bigg[\sup_{t\in[0,\tau]}\Big\|\int_0^t S(t-s)\Delta F(s)\,ds\Big\|^2\Bigg]\\
&\le
\E\Bigg[\sup_{t\in[0,\tau]}\Big(\int_0^t \|S(t-s)\Delta F(s)\|\,ds\Big)^2\Bigg]
\qquad \text{(Minkowski: $\|\int g\|\le \int \|g\|$)}\\
&\le
\E\Bigg[\sup_{t\in[0,\tau]}\Big(\int_0^t \|S(t-s)\|_{\mathcal{L}(\cH)}\,\|\Delta F(s)\|\,ds\Big)^2\Bigg]
\qquad \text{(sub-multiplicativity)}\\
&\le
\E\Bigg[\sup_{t\in[0,\tau]}\Big(\int_0^t M_T\,\|\Delta F(s)\|\,ds\Big)^2\Bigg]
\qquad \text{(semigroup bound)}\\
&=
M_T^2\,\E\Bigg[\sup_{t\in[0,\tau]}\Big(\int_0^t \|\Delta F(s)\|\,ds\Big)^2\Bigg]\\
&\le
M_T^2\,\E\Bigg[\sup_{t\in[0,\tau]}\Big( t\int_0^t \|\Delta F(s)\|^2\,ds \Big)\Bigg]
\qquad \text{(Cauchy--Schwarz: $(\int_0^t a)^2\le t\int_0^t a^2$)}\\
&\le
M_T^2\,\tau \int_0^\tau \E\|\Delta F(s)\|^2\,ds\\
&\le
M_T^2\,\tau \int_0^\tau L^2\,\E\|u(s)-v(s)\|^2\,ds
\qquad \text{(Lipschitz of $\cF_\theta$)}\\
&\le
M_T^2\,L^2\,\tau^2\,\E\Big[\sup_{s\in[0,\tau]}\|u(s)-v(s)\|^2\Big]
\qquad \text{(since $\int_0^\tau f(s)\,ds \le \tau \sup_s f(s)$)}\\
&=
M_T^2\,L^2\,\tau^2\,\|u-v\|_{\mathcal{X}_\tau}^2.
\qquad \text{(definition of $\|\cdot\|_{\mathcal{X}_\tau}$)}
\end{align*}
For the stochastic term, BDG and Assumption~\ref{ass:q_wiener} yield
\begin{align*}
&\E\Big[\sup_{t\in[0,\tau]}\Big\|\int_0^t S(t-s)\big(\Sigma(s,u(s))-\Sigma(s,v(s))\big)\,dW_s\Big\|^2\Big]\\
&\qquad \le C_{\mathrm{BDG}} M_T^2 \int_0^\tau \E\|\big(\Sigma(s,u(s))-\Sigma(s,v(s))\big)Q^{1/2}\|_{\mathrm{HS}}^2\,ds
\;\le\; C_{\mathrm{BDG}} M_T^2 L_\Sigma^2 \tau \|u-v\|_{\mathcal{X}_\tau}^2.
\end{align*}
Combining the two bounds gives
\[
\|\mathcal{T}u-\mathcal{T}v\|_{\mathcal{X}_\tau}^2
\le \Big( M_T^2 L^2 \tau^2 + C_{\mathrm{BDG}} M_T^2 L_\Sigma^2 \tau \Big)\|u-v\|_{\mathcal{X}_\tau}^2.
\]
Hence for $\tau>0$ sufficiently small, $\mathcal{T}$ is a contraction on $\mathcal{X}_\tau$.
By Banach's fixed-point theorem, there exists a unique mild solution on $[0,\tau]$.
Repeating the argument on successive intervals yields a unique mild solution on $[0,T]$.
\end{proof}

\subsection{Proof of Theorem~\ref{thm:stability_main} (Stability)}
\begin{proof}
Let $\mu,\tilde\mu$ be mild solutions with initial conditions $\mu_0,\tilde\mu_0$.
From the mild form and $S(t)=e^{t\cG}$, for any $t\in[0,T]$ we have
\begin{align*}
\mu(t)-\tilde\mu(t)
&= S(t)(\mu_0-\tilde\mu_0)
+ \int_0^t S(t-s)\big(\cF_\theta(s,\mu(s))-\cF_\theta(s,\tilde\mu(s))\big)\,ds\\
&\quad + \int_0^t S(t-s)\big(\Sigma(s,\mu(s))-\Sigma(s,\tilde\mu(s))\big)\,dW_s.
\end{align*}
Let $\Delta(t):=\mu(t)-\tilde\mu(t)$, $\Delta F(s):=\cF_\theta(s,\mu(s))-\cF_\theta(s,\tilde\mu(s))$, and $\Delta\Sigma(s):=\Sigma(s,\mu(s))-\Sigma(s,\tilde\mu(s))$. \\
For all $t\in[0,T]$,
\begin{align*}
z(t)
&:= \E\Big[\sup_{r\in[0,t]}\|\mu(r)-\tilde\mu(r)\|^2\Big]\\
&\le 3\,\E\Big[\sup_{r\in[0,t]}\|S(r)(\mu_0-\tilde\mu_0)\|^2\Big]
 +3\,\E\Big[\sup_{r\in[0,t]}\Big\|\int_0^r S(r-s)\Delta F(s)\,ds\Big\|^2\Big]\\
&\quad +3\,\E\Big[\sup_{r\in[0,t]}\Big\|\int_0^r S(r-s)\Delta\Sigma(s)\,dW_s\Big\|^2\Big]
\qquad \text{(mild form, $(a{+}b{+}c)^2\le 3(a^2{+}b^2{+}c^2)$)}\\[2pt]
&\le 3 M_T^2\,\E\|\mu_0-\tilde\mu_0\|^2
 +3 M_T^2\, t \int_0^t \E\|\Delta F(s)\|^2\,ds
 +3 C_{\mathrm{BDG}} M_T^2 \int_0^t \E\|\Delta\Sigma(s)Q^{1/2}\|_{\mathrm{HS}}^2\,ds \\
&\hspace{8.2cm}\text{(semigroup bound, Minkowski, BDG)}\\
&\le 3 M_T^2\,\E\|\mu_0-\tilde\mu_0\|^2
 +3 M_T^2\, L^2 t \int_0^t \E\|\mu(s)-\tilde\mu(s)\|^2\,ds
  +3 C_{\mathrm{BDG}} M_T^2\, L_\Sigma^2 \int_0^t \E\|\mu(s)-\tilde\mu(s)\|^2\,ds \\
&\hspace{8.2cm} \text{(Lipschitz)}\\
&\le 3 M_T^2\,\E\|\mu_0-\tilde\mu_0\|^2
 +3 M_T^2\big(L^2 t + C_{\mathrm{BDG}}L_\Sigma^2\big)\int_0^t z(s)\,ds
\qquad \text{(since $\|\Delta(s)\|^2\le \sup_{\rho\le s}\|\Delta(\rho)\|^2$)}\\
&\le 3 M_T^2\,\E\|\mu_0-\tilde\mu_0\|^2
 +3 M_T^2\big(L^2 T + C_{\mathrm{BDG}}L_\Sigma^2\big)\int_0^t z(s)\,ds.
\end{align*}
Let $C_0:=3M_T^2$ and $C_1:=3M_T^2\big(L^2T+C_{\mathrm{BDG}}L_\Sigma^2\big)$.\\ Then $z(t)\le C_0\,\E\|\mu_0-\tilde\mu_0\|^2 + C_1\int_0^t z(s)\,ds$,
so Gr\"onwall's inequality yields
\[
\E\Big[\sup_{t\in[0,T]}\|\mu(t)-\tilde\mu(t)\|^2\Big] = z(T)\le C_0 e^{C_1T}\E\|\mu_0-\tilde\mu_0\|^2.
\]
This implies the claimed stability bound ($C_T=C_0 e^{C_1T}$).
\end{proof}

\subsection{Proof of Theorem~\ref{thm:galerkin_main} (Galerkin Convergence)}

\begin{proof}
Let $P_K:\cH\to V_K$ be the orthogonal projection.
Since the Galerkin dynamics evolve in the finite-dimensional space $V_K$ and the projected drift/diffusion inherit the same Lipschitz and growth bounds, Eq.\eqref{eq:galerkin_projected} admits a unique solution $\mu_K$ (by the same argument as Theorem~\ref{thm:wellposed_main}).

Define the error $e_K(t):=\mu(t)-\mu_K(t)$.
Because $\cG$ is bounded, the mild formulation is equivalent to the integral form of Eq.\eqref{eq:spde_app}:
\[
\mu(t)=\mu_0+\int_0^t \big(\cG\mu(s)+\cF_\theta(s,\mu(s))\big)\,ds+\int_0^t \Sigma(s,\mu(s))\,dW_s,
\]
and similarly from Eq.\eqref{eq:galerkin_projected},
\[
\mu_K(t)=P_K\mu_0+\int_0^t P_K\big(\cG\mu_K(s)+\cF_\theta(s,\mu_K(s))\big)\,ds+\int_0^t P_K\Sigma(s,\mu_K(s))\,dW_s.
\]
Subtracting gives
\begin{align*}
e_K(t)
&=(I-P_K)\mu_0
+\int_0^t (I-P_K)\big(\cG\mu(s)+\cF_\theta(s,\mu(s))\big)\,ds
+\int_0^t (I-P_K)\Sigma(s,\mu(s))\,dW_s\\
&\quad+\int_0^t P_K\Big(\cG e_K(s) + \big(\cF_\theta(s,\mu(s))-\cF_\theta(s,\mu_K(s))\big)\Big)\,ds
+\int_0^t P_K\big(\Sigma(s,\mu(s))-\Sigma(s,\mu_K(s))\big)\,dW_s.
\end{align*}

Let $Z_K(t):=\E\big[\sup_{r\in[0,t]}\|e_K(r)\|^2\big]$.
Using Cauchy--Schwarz for the deterministic integrals, the BDG inequality for the stochastic integrals, the boundedness of $\cG$,
and the Lipschitz bounds on $\cF_\theta$ and $\Sigma(\cdot)Q^{1/2}$, we obtain for all $t\in[0,T]$,
\begin{equation}
\label{eq:galerkin_gronwall_refined}
Z_K(t)\;\le\;\alpha_K(t)\;+\;\beta \int_0^t Z_K(s)\,ds,
\end{equation}
where
\begin{align*}
\alpha_K(t)
&:= C_0\Bigg(
\E\|(I-P_K)\mu_0\|^2
+ t \int_0^t \E\|(I-P_K)\cG\mu(s)\|^2\,ds
+ t \int_0^t \E\|(I-P_K)\cF_\theta(s,\mu(s))\|^2\,ds \\
&\hspace{2.6cm}
+ C_{\mathrm{BDG}}\int_0^t \E\|(I-P_K)\Sigma(s,\mu(s))Q^{1/2}\|_{\mathrm{HS}}^2\,ds
\Bigg),
\\[2pt]
\beta
&:= C_1\Big( t\|\cG\|^2 + tL^2 + C_{\mathrm{BDG}}L_\Sigma^2 \Big)
\;\le\; C_1\Big( T\|\cG\|^2 + TL^2 + C_{\mathrm{BDG}}L_\Sigma^2 \Big),
\end{align*}
for constants $C_0,C_1>0$ independent of $K$ (depending only on norm inequalities).
Applying Gr\"onwall's inequality to Eq.\eqref{eq:galerkin_gronwall_refined} yields
\[
Z_K(T)\le \alpha_K(T)\exp(\beta T).
\]
Since $P_K\to I$ strongly on $\cH$ (Assumption~\ref{ass:approx}) and $\mu$ has finite second moments on $[0,T]$
(Theorem~\ref{thm:wellposed_main} and linear growth), we have $\alpha_K(T)\to 0$ as $K\to\infty$ by dominated convergence theorem.
Therefore $Z_K(T)\to 0$, i.e.,
\[
\E\Big[\sup_{t\in[0,T]}\|\mu(t)-\mu_K(t)\|^2\Big]\to 0\quad \text{as }K\to\infty.
\]
\end{proof}

\section{ELBO Details}
\label{app:elbo_details}

The nonlocal kernel matrix $A_{\mathrm{NL}}$, Neural ODE parameters $\theta$, and basis parameters are optimized by maximizing an evidence lower bound (ELBO) via stochastic variational inference (SVI).
Let the variational family factorize as
$
q(\sigma_{\mathrm{obs}},\sigma_{\mathrm{proc}},\bz_{1:T})
=
q(\sigma_{\mathrm{obs}})\,q(\sigma_{\mathrm{proc}})
\prod_{k=1}^{T} q(\bz_k \mid \cD_{1:k}, \sigma_{\mathrm{obs}}, \sigma_{\mathrm{proc}}),
$
where each $q(\bz_k\mid\cdot)$ is Gaussian with moments given by the predict--update recursion in Sec.~\ref{sec:structured_filtering}.
Specifically, we maximize
\begin{multline}
\label{eq:elbo_supp}
\cL_{\mathrm{ELBO}}
=
\E_{q(\sigma_{\mathrm{obs}},\sigma_{\mathrm{proc}},\bz_{1:T})}
\Bigg[
\sum_{k=1}^{T} \log p\!\left(\by_k \mid \bz_k,\sigma_{\mathrm{obs}}\right)
+ \log p(\bz_1)
+ \sum_{k=2}^{T} \log p\!\left(\bz_k \mid \bz_{k-1}, \sigma_{\mathrm{proc}}\right)
- \log q(\bz_{1:T} \mid \sigma_{\mathrm{obs}},\sigma_{\mathrm{proc}})
\Bigg] \\
-
\mathrm{KL}\!\left(q(\sigma_{\mathrm{obs}})\,\|\,p(\sigma_{\mathrm{obs}})\right)
-
\mathrm{KL}\!\left(q(\sigma_{\mathrm{proc}})\,\|\,p(\sigma_{\mathrm{proc}})\right),
\end{multline}
where $p(\sigma_{\mathrm{obs}})$ and $p(\sigma_{\mathrm{proc}})$ are specified by the priors in Level~3, $p(\bz_1)=\cN(\mathbf{0},\sigma_0^2 I_K)$, and
$p(\bz_k\mid\bz_{k-1},\sigma_{\mathrm{proc}})$ is the Gaussian transition approximation in Eq.\eqref{eq:em_transition}.
The likelihood term is evaluated on non-missing entries under the Gaussian observation model Eq.\eqref{eq:meas_coeff}.

\paragraph{Concrete forms of each term.}
We now expand the individual terms in Eq.\eqref{eq:elbo_supp}. All Gaussian log-densities below use the convention $\log\cN(\bx;\bmu,\bSigma) = -\tfrac{d}{2}\log(2\pi) - \tfrac{1}{2}\log|\bSigma| - \tfrac{1}{2}(\bx-\bmu)^\top\bSigma^{-1}(\bx-\bmu)$.

\textit{Observation log-likelihood.}
At time $t_k$, let $\cS_k$ denote the set of $n_k$ observed locations. Under Eq.\eqref{eq:meas_coeff}:
\begin{equation}
\label{eq:obs_ll}
\log p(\by_k \mid \bz_k, \sigma_{\mathrm{obs}})
=
-\frac{n_k}{2}\log(2\pi\sigma_{\mathrm{obs}}^2)
- \frac{1}{2\sigma_{\mathrm{obs}}^2}
\|\by_k - \bPhi_k \bz_k\|^2,
\end{equation}
where $\bPhi_k = \bPhi(\cS_k) \in \R^{n_k \times K}$ contains rows corresponding to observed locations only.

\textit{Transition log-density.}
Under the discretized transition Eq.\eqref{eq:em_transition}, for $k \geq 2$:
\begin{equation}
\label{eq:trans_ll}
\log p(\bz_k \mid \bz_{k-1}, \sigma_{\mathrm{proc}})
=
-\frac{K}{2}\log(2\pi\sigma_{\mathrm{proc}}^2 \Delta t_k)
- \frac{\|\bz_k - \hat{\bz}_k\|^2}{2\sigma_{\mathrm{proc}}^2 \Delta t_k},
\end{equation}
where $\hat{\bz}_k = \mathrm{ODESolve}(\bz \mapsto A_{\mathrm{NL}}\bz + g_\theta(\bz,t),\; \bz_{k-1},\; t_{k-1},\; t_k)$.

\textit{Prior log-density.}
\begin{equation}
\label{eq:prior_ll}
\log p(\bz_1)
=
-\frac{K}{2}\log(2\pi\sigma_0^2)
- \frac{\|\bz_1\|^2}{2\sigma_0^2}.
\end{equation}

\textit{Variational entropy.}
Since $q(\bz_{1:T}\mid\sigma_{\mathrm{obs}},\sigma_{\mathrm{proc}}) = \prod_{k=1}^{T} q(\bz_k\mid \cD_{1:k},\sigma_{\mathrm{obs}},\sigma_{\mathrm{proc}})$ with $q(\bz_k\mid\cdot) = \cN(\bm_k^+,\bP_k^+)$, where $(\bm_k^+,\bP_k^+)$ depend on $(\sigma_{\mathrm{obs}},\sigma_{\mathrm{proc}})$ through the filtering recursion, the conditional entropy decomposes as:
\begin{equation}
\label{eq:var_entropy}
-\E_q\!\left[\log q(\bz_{1:T}\mid\sigma_{\mathrm{obs}},\sigma_{\mathrm{proc}})\right]
=
\sum_{k=1}^{T} \frac{1}{2}\log|2\pi e\, \bP_k^+|
=
\sum_{k=1}^{T} \left(\frac{K}{2}\log(2\pi e) + \frac{1}{2}\log|\bP_k^+|\right).
\end{equation}

\section{Additional Experimental Details}
\label{app:experiments}

\subsection{Dataset Details}
\label{app:dataset_details}

\paragraph{D1: Advection--diffusion.}
We use constant velocity $\mathbf{v}=(0.5,\,0.3)$, diffusion coefficient $\kappa=0.1$, and forcing $F(x,t)$ given by two spatially localized Gaussian sources with oscillating amplitudes.
We discretize on a $64{\times}64$ grid with $\Delta t_{\mathrm{sim}}=0.01$ and save snapshots every $\Delta t_{\mathrm{save}}=1.0$, yielding $T=151$ frames over $[0,150]$.
Sensor observations are obtained by evaluating $u$ at $S=50$
irregularly placed coordinates and adding Gaussian noise with $\sigma_{\mathrm{noise}}=0.1$. The first $75\%$ of frames are used for training and the remainder for evaluation, with rolling-origin windows of context length $L{=}50$,
forecast horizon $H{=}10$, and stride $r{=}5$.

\paragraph{D2: Nonlocal IDE.}
We simulate on $D=[0,1]^2$ with periodic boundary conditions on a $64{\times}64$ grid, $T_{\mathrm{final}}=20$, and step size $\Delta t=0.1$, integrated with a fourth-order Runge--Kutta (RK4) method. We observe $S=36$ sensors on a regular spatial grid and add Gaussian noise with $\sigma_{\mathrm{noise}}=0.05$. The local diffusion coefficient is $\kappa=0.01$, and $f(x,t)$ consists of two spatially localized Gaussian sources with oscillating amplitudes.

The ground-truth kernel $G_\star$ has rank $R_\star=4$ with eigenvalues $\lambda_p = 1/(p+1)$.
The four modes $\psi_p$ are defined on normalized coordinates $(\tilde{x},\tilde{y}) \in [0,1]^2$:
\begin{alignat*}{2}
\psi_1 &= \sin(2\pi \tilde{y}) &\qquad& \text{(north--south coupling)},\\
\psi_2 &= \sin(2\pi \tilde{x}) && \text{(east--west coupling)},\\
\psi_3 &= \sin(2\pi \tilde{x})\sin(2\pi \tilde{y}) && \text{(four-corner interaction)},\\
\psi_4 &= \sin(4\pi \tilde{x})\cos(4\pi \tilde{y}) && \text{(multi-scale coupling)},
\end{alignat*}
each $L^2$-normalized over $D$. These modes create nonlocal interactions that couple spatially distant regions, mimicking teleconnection-like patterns where the evolution at one location depends on remote areas rather than only the immediate neighborhood. The first $75\%$ of time steps are used for training and the remainder for evaluation, with context length $L{=}50$, forecast horizon $H{=}10$, and stride $r{=}5$.

\paragraph{D3: EPA PM\textsubscript{2.5}.}
Daily mean PM\textsubscript{2.5} concentrations are obtained from the U.S.\ EPA Air Quality System (AQS) for the year 2024.
We select stations in the northeastern United States (latitude $35^\circ$- $45^\circ$N, longitude $70^\circ$--$85^\circ$W) with at least $80\%$ temporal coverage, yielding $S{=}50$ stations over $T{=}366$ days.
Values are capped at $150\,\mu\text{g}/\text{m}^3$ prior to the $\log(1{+}y)$ transform.
The natural missing rate is approximately $1$--$2\%$; we additionally introduce artificial missing-at-random during training. We use $L{=}5$ days and $H{=}1$ day with stride $r{=}1$.

\subsection{Metrics}
\label{app:metrics}

\paragraph{RMSE.}
Root mean squared error between the predictive mean and ground truth, averaged over all forecast windows, horizons, and sensors:
\[
\mathrm{RMSE}
= \sqrt{\frac{1}{|\mathcal{W}|}\sum_{w \in \mathcal{W}}
  \frac{1}{H \cdot S_w}\sum_{h=1}^{H}\sum_{s=1}^{S_w}
  \bigl(\hat{y}_{s,h}^{(w)} - y_{s,h}^{(w)}\bigr)^2},
\]
where $\mathcal{W}$ indexes rolling windows, $H$ is the forecast horizon, $S_w$ is the number of evaluated sensors in window $w$, and $\hat{y}$ denotes the predictive mean.

\paragraph{CRPS.}
The continuous ranked probability score (CRPS) measures the compatibility of a predictive CDF $F$ with the observed value $y$:
\[
\mathrm{CRPS}(F, y)
= \int_{-\infty}^{\infty}
  \bigl(F(z) - \mathbf{1}[z \ge y]\bigr)^2 dz.
\]
For \textsc{NLBST}, the predictive distribution is obtained by propagating $N{=}100$ posterior samples through the nonlinear ODE dynamics, yielding a distribution-free empirical CRPS that captures non-Gaussian effects from hyperparameter uncertainty and nonlinear propagation.
For baselines whose uncertainty estimates reduce to a predictive mean and variance (MC Dropout, VAE sampling), we compute CRPS under a Gaussian approximation in closed form via $\mathrm{CRPS}(\mathcal{N}(\mu,\sigma^2), y) = \sigma\bigl[\bar{z}\,(2\Phi(\bar{z})-1) + 2\varphi(\bar{z}) - \pi^{-1/2}\bigr]$, where $\bar{z} = (y-\mu)/\sigma$.

\subsection{Implementation Details}
\label{sec:impl_details}

\paragraph{Model configuration.}
We use $K{=}16$ RBF basis functions for \textbf{D1}, and $K{=}24$ Fourier basis functions for \textbf{D2} and \textbf{D3}. 
For \textbf{D2} and \textbf{D3}, we use a truncated Fourier basis on $[0,1]^d$ which is $L^2$-orthonormal. % and hence simplifies the Galerkin projection Theorem~\ref{thm:galerkin_main}. 
For the RBF basis (\textbf{D1}), basis centers are initialized from sensor coordinates, and the lengthscale is set based on inter-sensor distances; basis parameters are fixed during training for stability.
The Neural ODE residual $g_\theta$ is a 2-layer multi-layer perceptron (MLP) with hidden dimension 64 and tanh activation. ODE integration uses a fourth-order Runge--Kutta (RK4) method. All spatial coordinates are normalized to $[0,1]^d$ prior to basis construction and GP kriging. Min-max normalization is applied per sensor for all methods including \textsc{NLBST}.

\paragraph{Training and prediction.}
We optimize the ELBO via stochastic variational inference (SVI) with Adam (learning rate $10^{-3}$) and gradient clipping (max norm 1). \textsc{NLBST} is trained for 200 epochs with early stopping of patience 10. The variational guide performs full-covariance Kalman filtering with inline predict-update steps (Sec.~\ref{sec:inference}). For probabilistic forecasts, we draw $N{=}100$ Monte Carlo samples from the variational posterior and decode via the spatial basis. All results are averaged over 10 runs with different random seeds.

\subsection{Baseline Implementation Details}
\label{app:baselines}
All baselines are trained on the same data splits and evaluated under the same protocol as \textsc{NLBST}. Since most baselines were not originally designed for simultaneous irregular time series forecasting, spatial generalization, and uncertainty quantification, we adapt each method to provide predictions at unmeasured locations and calibrated predictive intervals.
Table~\ref{tab:baseline_capabilities} summarizes the key modeling capabilities of each method.

\begin{table}[h]
\caption{Modeling properties of baselines.
$\checkmark$: native support;
$\triangle$: partial or approximate support;
$\times$: not supported (adaptation required).}
\label{tab:baseline_capabilities}
\centering\small
\begin{tabular}{lcccc}
\toprule
Model & Spatially inductive & Irregular-time & Principled UQ & Nonlocal \\
\midrule
\textsc{NLBST} (Ours) & $\checkmark$ & $\checkmark$ & $\checkmark$ & $\checkmark$ \\
Linear-DSTM   & $\checkmark$ & $\times$ & $\checkmark$ & $\times$ \\
GRU-D         & $\times$     & $\checkmark$ & $\times$ & $\times$ \\
Latent ODE    & $\times$     & $\checkmark$ & $\triangle$ & $\times$ \\
FNO           & $\times$     & $\times$ & $\times$ & $\triangle$ \\
GraFITi       & $\times$     & $\checkmark$ & $\times$ & $\triangle$ \\
APN           & $\times$     & $\checkmark$ & $\times$ & $\times$ \\
\bottomrule
\end{tabular}
\end{table}

\paragraph{Linear-DSTM \citep{wikle2010general}.}
A discrete-time linear dynamic spatio-temporal model
$\bz_t = A\bz_{t-1} + \beps_t$,
$\by_t = \bPhi\bz_t + \mathbf{\nu}_t$,
using the same spatial basis as \textsc{NLBST}.
Inference includes the standard Kalman filter. This baseline isolates the contribution of nonlinear dynamics and continuous-time propagation.

\paragraph{GRU-D \citep{che2018recurrent}.}
GRU with trainable exponential decay on both input and hidden states to handle irregular time gaps. Observations are imputed via a learned convex combination
of the last observed value and the empirical mean, weighted by a time-decay gate.
Since GRU-D operates on a fixed sensor set, spatial prediction at unmeasured locations uses GP kriging with an RBF kernel. We used MC dropout for uncertainty estimation.

\paragraph{Latent ODE \citep{rubanova2019latent}.}
An ODE-RNN encoder processes observations in forward time to produce a variational posterior $q(\bz_0)$; latent states are then decoded via a Neural ODE.
Uncertainty is obtained by sampling from the VAE posterior. Spatial prediction at unmeasured locations uses GP kriging of the decoded measured-location outputs.

\paragraph{FNO \citep{li2020fourier}.}
A Fourier Neural Operator with 1-D spectral convolution layers operating along the temporal axis over all measured sensors jointly. At each layer, the input passes through a spectral convolution (FFT $\to$ linear mixing in frequency $\to$ iFFT) and a pointwise linear transform, combined via a residual connection.
Forecasting is autoregressive: at each horizon step, the model appends its own prediction and re-applies the Fourier blocks. FNO achieves global mixing via spectral convolution but does not learn an explicit nonlocal kernel. Uncertainty is provided via MC dropout, and spatial prediction uses GP kriging.

\paragraph{GraFITi \citep{yalavarthi2024grafiti}.}
A bipartite graph attention model that represents observations as edges connecting time nodes and channel (sensor) nodes. Missing observations correspond to absent edges, so irregular and incomplete data are handled natively.
Message passing alternates between channel$\to$time and time$\to$channel attention, with edge embeddings updated at each layer. We adapt GraFITi for forecasting by marking future time--channel positions as target edges with unknown values; the model predicts these entries in a single forward pass. GraFITi achieves implicit cross-channel coupling via bipartite attention but lacks an explicit spatial nonlocal mechanism. Uncertainty is obtained via MC dropout. Spatial prediction at unmeasured locations uses GP kriging with an RBF kernel whose length scale is set to 0.3.

\paragraph{APN \citep{liu2026apn}.}
A patch-based architecture using Time-Aware Patch Aggregation (TAPA): per-variable soft patches with learnable sigmoid boundaries aggregate context observations, followed by cross-patch attention and a time-query decoder.
The decoder conditions on learned time encodings to produce predictions at arbitrary future times. Uncertainty is obtained via MC dropout. Spatial prediction at unmeasured locations uses GP kriging with the same RBF kernel configuration as GraFITi.

\paragraph{Spatial interpolation for transductive baselines.}
All baselines except Linear-DSTM are spatially transductive: they produce outputs only at the training sensor locations. For these methods, predictions at unmeasured locations are obtained via GP kriging weights $\bW = K(\cS_u, \cS_m)\,K(\cS_m, \cS_m)^{-1}$ with an RBF kernel (length scale 0.3) and a nugget of $10^{-2}$ for numerical stability. In contrast, \textsc{NLBST} and Linear-DSTM predict at
unmeasured locations by evaluating the spatial basis at new coordinates, requiring no post-hoc interpolation.

\paragraph{Uncertainty quantification.}
Linear-DSTM and \textsc{NLBST} provide principled Bayesian uncertainty via the Kalman filtering posterior. Latent ODE obtains uncertainty through VAE posterior sampling. GRU-D, FNO, GraFITi, and APN use MC dropout (50 forward passes) to approximate predictive distributions.

\paragraph{Training details.}
All baselines are trained with Adam (lr $10^{-3}$) and gradient clipping (max norm 1.0).
For fair comparison, all methods use the same context length $L$ and forecast horizon $H$ as \textsc{NLBST} within each experimental setting. All models are trained for 200 epochs with early stopping of patience 10, and use the same preprocessing as \textsc{NLBST}. All results are averaged over 10 runs with different random seeds.

\section{Additional Experimental Results}
\label{app:additional_results}

\subsection{PM\textsubscript{2.5} Supplementary Results}

Table~\ref{tab:pm25_missing_supp} reports the robustness of \textsc{NLBST} to increasing missing rates on \textbf{D3} with $40\%$ spatial holdout. RMSE remains stable across $\rho_m \in \{0, 10, 20, 30\}\%$ for both measured and unmeasured locations, confirming that the \textsc{NLBST} gracefully handles observation sparsity by widening the prior uncertainty when measurements are absent.

Table~\ref{tab:pm25_rq2_supp} examines spatial generalization on \textbf{D3} as the fraction of held-out stations increases. Performance at unmeasured locations remains comparable to measured locations even at $40\%$ holdout, demonstrating that the spatial basis provides effective inductive prediction to unseen coordinates on real-world data.

\begin{table}[h]
\centering
\caption{(\textbf{RQ1}) Robustness to missing data on \textbf{D3} (PM\textsubscript{2.5}) (\textsc{NLBST}, $40\%$ spatial holdout).
RMSE and CRPS in $\log(1{+}y)$ space (mean $\pm$ std over 10 seeds).}
\label{tab:pm25_missing_supp}
\small
\begin{tabular}{c ccc}
\toprule
$\rho_m$ & RMSE (meas.) & RMSE (unmeas.) \\
\midrule
$0\%$  & $0.554 \pm 0.045$ & $0.558 \pm 0.053$  \\
$10\%$ & $0.554 \pm 0.047$ & $0.558 \pm 0.054$  \\
$20\%$ & $0.551 \pm 0.044$ & $0.555 \pm 0.052$  \\
$30\%$ & $0.546 \pm 0.045$ & $0.552 \pm 0.052$  \\
\bottomrule
\end{tabular}
\end{table}

\begin{table}[h]
\centering
\caption{(\textbf{RQ2}) Spatial generalization on \textbf{D3} (PM\textsubscript{2.5}) (\textsc{NLBST}).
RMSE in $\log(1{+}y)$ space (mean $\pm$ std over 10 seeds).}
\label{tab:pm25_rq2_supp}
\small
\begin{tabular}{c cc}
\toprule
Holdout \% & RMSE (measured) & RMSE (unmeasured) \\
\midrule
$0\%$  & $0.567 \pm 0.003$ & --- \\
$20\%$ & $0.542 \pm 0.026$ & $0.523 \pm 0.025$ \\
$40\%$ & $0.552 \pm 0.047$ & $0.557 \pm 0.053$ \\
\bottomrule
\end{tabular}
\end{table}

\subsection{Dynamics Ablation (RQ4)}
Table~\ref{tab:ablation_supp} isolates the contribution of each component in the \textsc{NLBST} dynamics on \textbf{D2} and \textbf{D3}. On \textbf{D2}, removing the nonlocal coupling (neural only) degrades CRPS most severely, while removing the neural residual (linear only) primarily affects RMSE, indicating that the two components play complementary roles in prediction accuracy and uncertainty calibration. The same pattern holds on the real-world \textbf{D3} dataset: both ablations increase measured-location RMSE relative to the full model ($0.480 \to 0.495/0.496$), confirming that both $A_{\mathrm{NL}}$ and $g_\theta$ provide complementary performance gains rather than introducing redundant complexity.

\begin{table}[h]
\caption{(\textbf{RQ3}, \textbf{RQ4}) Dynamics ablation on \textbf{D2} (Nonlocal IDE) and \textbf{D3} (PM\textsubscript{2.5}). Mean (std) over 10 seeds; \textbf{D3} reports RMSE at measured locations. Best per column in \textbf{bold}.}
\label{tab:ablation_supp}
\centering\small
\setlength{\tabcolsep}{5pt}
\begin{tabular}{lccc}
\toprule
& \multicolumn{2}{c}{\textbf{D2}} & \textbf{D3} \\
\cmidrule(lr){2-3}\cmidrule(lr){4-4}
Dynamics & RMSE$\downarrow$ & CRPS$\downarrow$ & RMSE$\downarrow$ \\
\midrule
$A_{\mathrm{NL}}\bz + g_\theta$ (full)
  & \textbf{9.54\,(3.36)} & \textbf{6.84\,(2.26)} & \textbf{0.480\,(0.010)} \\
$A_{\mathrm{NL}}\bz$ (linear only)
  & 10.85\,(2.70) & 7.20\,(1.40) & 0.495\,(0.006) \\
$g_\theta(\bz,t)$ (neural only)
  & 10.84\,(2.73) & 8.03\,(1.42) & 0.496\,(0.006) \\
\bottomrule
\end{tabular}
\end{table}

\subsection{Long-Horizon Forecasting}

To assess robustness beyond the short-horizon setting in the main experiments (Table~\ref{tab:pm25_rq1}, $H{=}1$), we evaluate long-horizon forecasting on \textbf{D3} with $H \in \{10,30,50\}$. The setup matches the main D3 experiments, except that we use $T_{\mathrm{train}}{=}200$ and $T_{\mathrm{test}}{=}166$ to support evaluation up to $H{=}50$, with the context window scaling with $H$ ($14, 30, 50$). Means over 3 seeds are reported in Table~\ref{tab:long_horizon}. \textsc{NLBST} consistently achieves the lowest RMSE and CRPS across all horizons. Notably, Linear-DSTM---which shares the same spatial basis and Kalman updates but uses discrete-time linear dynamics---degrades catastrophically as $H$ grows (RMSE $1.43 \to 19.17 \to 63.23$), whereas \textsc{NLBST} remains stable (RMSE $\le 0.59$ throughout). This highlights that continuous-time nonlinear dynamics are essential for stable extrapolation over long horizons.

\begin{table}[h]
\caption{(\textbf{RQ1}, \textbf{RQ3}) Long-horizon forecasting on \textbf{D3}
(PM\textsubscript{2.5}) with $H \in \{10,30,50\}$. Mean over 3 seeds; RMSE and CRPS in $\log(1{+}y)$ space. Best per column in \textbf{bold}.}
\label{tab:long_horizon}
\centering\small
\setlength{\tabcolsep}{5pt}
\begin{tabular}{l cc cc cc}
\toprule
& \multicolumn{2}{c}{$H{=}10$} & \multicolumn{2}{c}{$H{=}30$} & \multicolumn{2}{c}{$H{=}50$} \\
\cmidrule(lr){2-3}\cmidrule(lr){4-5}\cmidrule(lr){6-7}
Model & RMSE & CRPS & RMSE & CRPS & RMSE & CRPS \\
\midrule
\textsc{NLBST} (Ours)
  & \textbf{0.552} & \textbf{0.322} & \textbf{0.590} & \textbf{0.347} & \textbf{0.490} & \textbf{0.286} \\
Linear-DSTM
  & 1.432 & 0.778 & 19.165 & 9.106 & 63.225 & 35.082 \\
GRU-D
  & 1.260 & 1.100 & 1.314 & 1.151 & 1.238 & 1.080 \\
Latent ODE
  & 1.287 & 0.906 & 1.232 & 0.848 & 1.304 & 0.925 \\
FNO
  & 1.264 & 0.748 & 1.288 & 0.781 & 1.407 & 0.830 \\
GraFITi
  & 0.977 & 0.779 & 0.863 & 0.684 & 0.888 & 0.708 \\
APN
  & 0.781 & 0.579 & 0.953 & 0.765 & 0.787 & 0.595 \\
\bottomrule
\end{tabular}
\end{table}
\end{document}